\documentclass{article}

\usepackage{microtype}
\usepackage{graphicx}
\usepackage{subcaption}
\usepackage{booktabs} 
\usepackage{multirow}
\usepackage{colortbl}
\usepackage{pifont}
\usepackage{tabularx}
\usepackage{adjustbox}
\usepackage{makecell}

\usepackage{hyperref}

\usepackage[accepted]{icml2026}

\usepackage{amsmath}
\usepackage{amssymb}
\usepackage{mathtools}
\usepackage{amsthm}

\usepackage[capitalize,noabbrev]{cleveref}

\theoremstyle{plain}
\newtheorem{theorem}{Theorem}[section]
\newtheorem{proposition}[theorem]{Proposition}

\theoremstyle{definition}

\theoremstyle{remark}

\usepackage[textsize=tiny]{todonotes}

\icmltitlerunning{Energy-Structured Low-Rank Adaptation for Continual Learning}

\begin{document}

\twocolumn[
  \icmltitle{Energy-Structured Low-Rank Adaptation for Continual Learning}



  \icmlsetsymbol{equal}{*}

  \begin{icmlauthorlist}
    \icmlauthor{Longhua Li}{seu,lab}
    \icmlauthor{Lei Qi}{seu,lab}
    \icmlauthor{Qi Tian}{comp}
    \icmlauthor{Xin Geng}{seu,lab}
  \end{icmlauthorlist}

  \icmlaffiliation{seu}{School of Computer Science and Engineering, Southeast University, Nanjing, China}
  \icmlaffiliation{lab}{Key Laboratory of New Generation Artificial Intelligence Technology and Its Interdisciplinary Applications (Southeast University), Ministry of Education, China}
  \icmlaffiliation{comp}{Huawei Technologies, Shenzhen, China}
  \icmlcorrespondingauthor{Lei Qi}{qilei@seu.edu.cn}
  \icmlcorrespondingauthor{Xin Geng}{xgeng@seu.edu.cn}

  \icmlkeywords{Machine Learning, ICML}

  \vskip 0.3in
]



\printAffiliationsAndNotice{}  

\begin{abstract}
While orthogonal subspace methods try to mitigate task interference in Continual Learning (CL), they often suffer from energy diffusion across the basis, hindering knowledge compaction and exhausting capacity for future tasks.
We observe that output feature drift induced by parameter updates is inherently low-rank, and theoretically prove that preserving parameters along the principal directions of this drift minimizes the output reconstruction error.
Motivated by this, we propose \textbf{E}nergy-Concentrated and \textbf{E}nergy-Ordered \textbf{Lo}w-\textbf{R}ank \textbf{A}daptation (E$^2$-LoRA).
By explicitly ordering and concentrating knowledge into leading ranks, E$^2$-LoRA frees capacity for subsequent tasks.
Furthermore, we design a dynamic rank allocation strategy to balance stability and plasticity by jointly optimizing energy retention and model plasticity.
Extensive experiments across multiple benchmarks demonstrate that E$^2$-LoRA achieves state-of-the-art performance.
Code is available at \url{https://github.com/kiddo127/E2-LoRA}.


\end{abstract}

\section{Introduction}
Continual Learning (CL) aims to build models that learn new tasks sequentially without forgetting previous knowledge, a capability crucial for real-world AI systems that evolve over time \cite{rebuffi2017icarl,zhoumodel}. While pre-trained models (PTMs) have revolutionized CL by providing robust foundational representations \cite{dosovitskiy2020image,zhou2025revisiting}, parameter drift remains a fundamental challenge: sequential fine-tuning causes feature representations to degrade, leading to catastrophic forgetting \cite{french1999catastrophic,kirkpatrick2017overcoming}.

Current parameter-efficient fine-tuning (PEFT) methods \cite{hu2022lora,wang2022dualprompt,wang2022learning} mitigate this issue by freezing PTM weights and learning lightweight adapters. 
Orthogonalization-based methods have been proposed to mitigate catastrophic forgetting, and they mainly operate in either the parameter space or the input feature space \cite{sahagradient,liang2023adaptive,wang2023orthogonal,liang2024inflora}. However, as illustrated in Figure \ref{fig:diagram}, orthogonalization in both spaces results in energy-dispersed representations, which severely restrict future model plasticity.
In contrast, we observe that task-induced output feature drift resides in a far more concentrated and low-dimensional subspace.
We further provide theoretical evidence that preserving parameters along the principal directions of this drift minimizes the output reconstruction error.
By orthogonalizing LoRA updates within this output-induced spectral space, we can effectively preserve knowledge of past tasks while simultaneously freeing up capacity for learning new ones.

\begin{figure}[t]
    \centering
    \includegraphics[width=\linewidth]{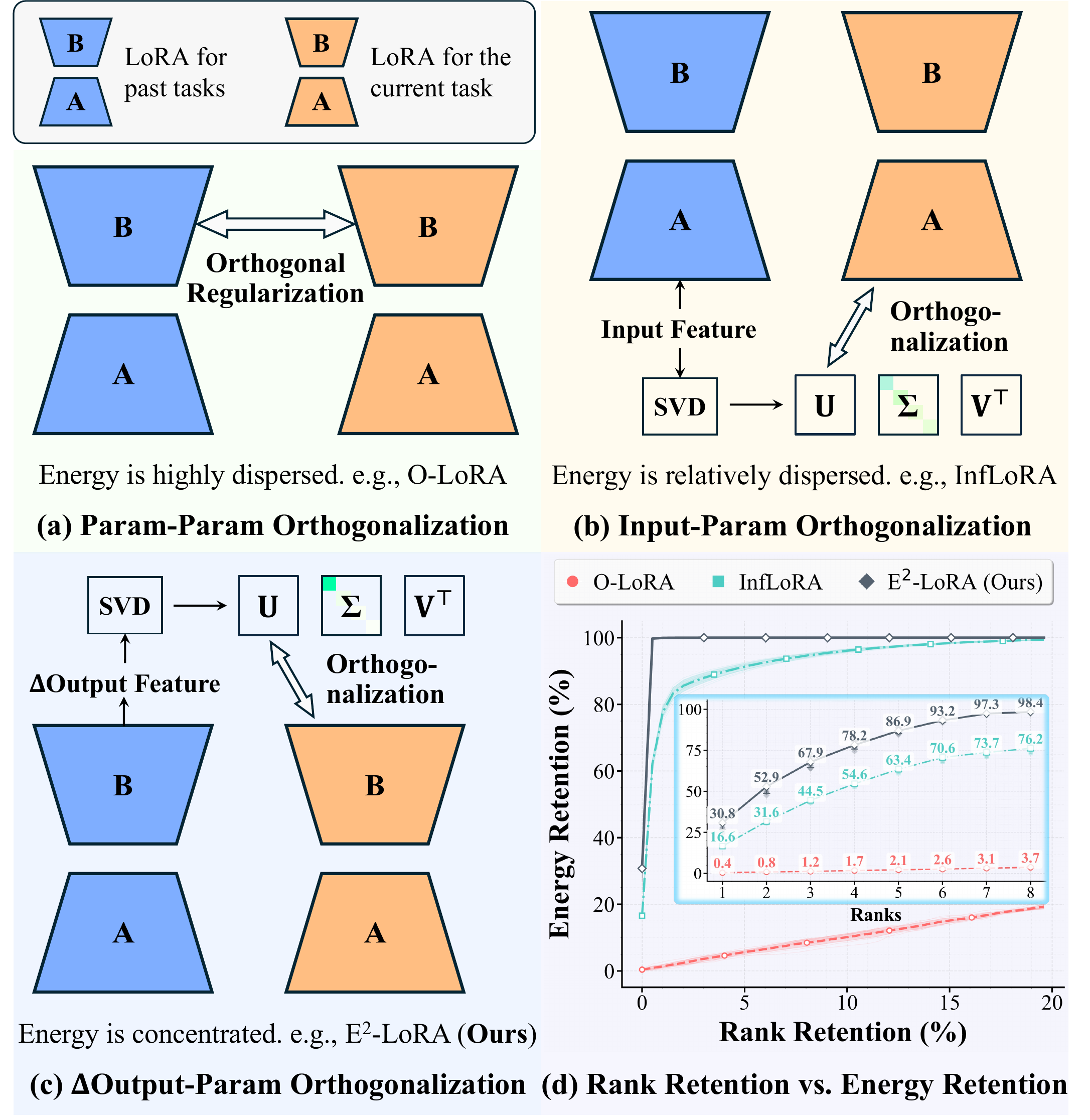}
    \caption{Comparison of orthogonal constraints applied in: (a) parameter space; (b) input space; (c) energy-concentrated output drift space (ours). (d) E$^2$-LoRA maintains high energy retention after rank truncation, enabling stable knowledge preservation (for past tasks) and efficient subspace reuse (for new tasks).}
    \label{fig:diagram}
\end{figure}

Based on these analysis, we propose \textbf{E}nergy-Concentrated and \textbf{E}nergy-Ordered \textbf{Lo}w-\textbf{R}ank \textbf{A}daptation (E$^2$-LoRA). For each new task, we allocate a dedicated LoRA module and apply a post-hoc transformation to the learned LoRA such that task-specific knowledge is ordered by rank (\textit{Energy-Ordered}) and concentrated in only a few leading ranks (\textit{Energy-Concentrated}). This design allows the majority of tail ranks to be reinitialized and reused for learning new tasks, thereby preserving the stability of previous tasks while providing sufficient capacity for new ones.

Specifically, after training a LoRA module for a given task, we perform PCA on the output feature shifts induced by the LoRA to obtain an energy-concentrated and energy-ordered basis. By aligning LoRA parameters with this basis, we concentrate task-specific knowledge into leading ranks. When a new task arrives, the parameters corresponding to the low-energy bases of previous tasks are released and repurposed to accommodate learning for the new task.
Moreover, by jointly considering the energy retention required for old tasks and the plasticity needed for the new task, we design an adaptive rank allocation strategy that achieves a favorable stability–plasticity trade-off.

Our main contributions are summarized as follows:
\begin{itemize}
    \item We propose E$^2$-LoRA. Our theoretical analysis shows that E$^2$-LoRA is optimal for preserving knowledge of previous tasks under rank decay, which also facilitates freeing capacity for learning new tasks. Extensive experimental results further validate this property.
    \item By jointly considering the energy retention of previous tasks and the plasticity required for new tasks, we design an effective strategy for allocating the LoRA rank of each task, achieving a well-balanced trade-off between model stability and plasticity.
    \item Extensive experiments on both class-incremental learning and domain-incremental learning benchmarks demonstrate that our proposed E$^2$-LoRA consistently outperforms existing methods.
\end{itemize}

\section{Related Work}

\subsection{Continual Learning (CL)}
Continual Learning (CL) aims to enable models to acquire new classes over time without forgetting previously learned ones \cite{li2017learning,rebuffi2017icarl}. A central challenge in CL is catastrophic forgetting, where adapting to new tasks degrades performance on earlier ones \cite{french1999catastrophic,french2020modeling}, reflecting the fundamental stability–plasticity dilemma.
Existing CL approaches can be broadly grouped into several categories. Rehearsal-based methods \cite{aljundi2019gradient,chaudhry2019efficient,liu2020mnemonics} mitigate forgetting by replaying exemplars from past tasks, but require storing historical data. Knowledge distillation–based methods \cite{dhar2019learning,douillard2020podnet,simon2021learning,tao2020topology} transfer knowledge from previous models to the current one, while parameter regularization–based methods \cite{kirkpatrick2017overcoming,aljundi2018memory} constrain important parameters during updates. Model rectification–based methods \cite{belouadah2019il2m,pham2022continual,shi2022mimicking} adjust inductive biases to reduce incremental prediction bias.
More recently, expandable networks \cite{yan2021dynamically,wang2022learning,douillard2022dytox,huang2023resolving,chen2023dynamic,hu2023dense} have achieved strong performance by allocating task-specific backbones and classifiers.

\subsection{CL with Pre-Trained Models}
Large-scale pre-training \cite{han2021pre} has substantially advanced continual learning (CL) by endowing models with strong generalization capabilities. Pre-trained models (PTMs), particularly Vision Transformers (ViTs) \cite{dosovitskiy2020image}, produce transferable representations that help mitigate catastrophic forgetting. Most PTM-based CL methods freeze backbone weights and rely on parameter-efficient fine-tuning (PEFT) techniques for incremental adaptation \cite{smith2023coda,yan2021dynamically}.
Among PEFT approaches, prompt-based methods have received considerable attention. L2P \cite{wang2022learning}, DualPrompt \cite{wang2022dualprompt}, and CODA-Prompt \cite{smith2023coda} adapt PTMs via learnable prompts to encode task-invariant and task-specific knowledge. While effective, these methods remain susceptible to cross-task conflicts and prompt retrieval errors.
Adapter-based methods offer an alternative. EASE \cite{zhou2024expandable} and MOS \cite{sun2025mos} construct task-specific adapter subspaces to alleviate interference, while TUNA \cite{wang2025integrating} integrates task-specific and shared adapters via entropy-based selection and fusion.

More advanced PEFT techniques further improve task separation.
O-LoRA \cite{wang2023orthogonal} reduces interference with previous tasks by incorporating an orthogonalization constraint loss between the parameters of prior and subsequent tasks. GPM \cite{sahagradient} and DualGPM \cite{liang2023adaptive} leverage input representations to estimate dominant gradient directions, using them to constrain the gradient updates of subsequent tasks and thereby mitigate forgetting. InfLoRA \cite{liang2024inflora} enforces orthogonality between LoRA modules, and BiLoRA \cite{zhu2025bilora} expands the parameter space to approximate orthogonal task subspaces. 
Recent studies have shown that proxy-based decomposition can transform a parameter matrix into two low-rank matrices while maximizing knowledge retention \cite{li2026stratified,li2026model}.
Our analysis shows that orthogonalizing subsequent-task parameters against previous-task parameters or input features is suboptimal, as parameter energy is dispersed and input features are high-dimensional in pre-trained models.
However, the task-induced drift in the output features relative to the pre-trained model is low-dimensional and can be effectively captured via SVD decomposition. This allows preserving core knowledge by retaining only dominant directions, freeing parameter space for future tasks while balancing stability and plasticity.

\section{Methodology}
\begin{figure*}[t]
    \centering
    \includegraphics[width=\linewidth]{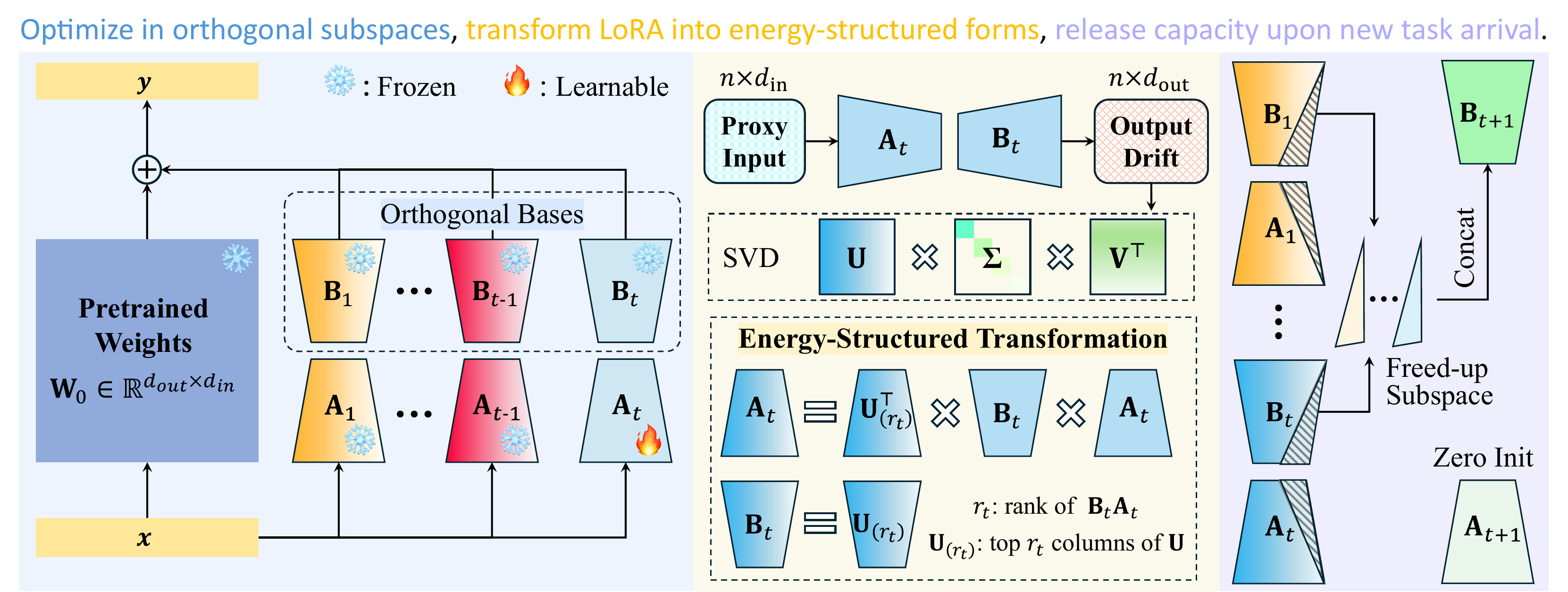}
    \caption{Overview of the proposed E$^2$-LoRA. Each new task is learned within a frozen orthogonal subspace to prevent interference with previous tasks. After task training, an energy-structured transformation reorganizes the LoRA parameters by rank-wise importance, enabling low-importance ranks to be discarded and reused for subsequent tasks.}
    \label{fig:overview}
\end{figure*}
\subsection{Preliminary and Motivation}
Let $\mathbf{W}_{0}\in\mathbb{R}^{d_{\text{out}}\times d_{\text{in}}}$ denote the pre-trained weight matrix of a linear transformation (e.g., a projection layer in attention or an MLP). In continual learning, tasks arrive sequentially. Before training on the $t$-th task dataset $\mathcal{D}_t$, the model parameters are denoted as $\mathbf{W}_{t-1}$. After training on task $t$, the parameters are updated as:
\begin{equation}
    \mathbf{W}_{t}=\mathbf{W}_{t-1}+ \Delta \mathbf{W}_{t},
\end{equation}
where $\Delta \mathbf{W}_{t}$ represents the task-specific parameter update.

Recent PEFT methods restrict the weight update $\Delta \mathbf{W}_{t}$ to a low rank $r$, with $r \ll \min \left( d_{\text{out}},d_{\text{in}} \right)$, and fine-tune the model using a LoRA module of rank $r$. The resulting parameter update can be written as $\Delta \mathbf{W}_t = \mathbf{B}_t \mathbf{A}_t$.
A key challenge in continual learning is to reduce interference between tasks while preserving sufficient plasticity to learn new tasks. To address this, orthogonalization techniques have been widely adopted, primarily including orthogonalization between the parameters of new and previous tasks \cite{wang2023orthogonal}, as well as between the input representations of previous tasks and the parameters of new tasks \cite{sahagradient,liang2023adaptive,liang2024inflora}.

\paragraph{Param-Param Orthogonalization.}
A straightforward approach \cite{wang2023orthogonal} imposes orthogonality constraints directly on the LoRA parameters $\mathbf{B}_{t}$ of the current task with respect to those of previous tasks, $\mathbf{B}_{1},\cdots,\mathbf{B}_{t-1}$:
\begin{equation}
    \mathcal{L}_{\text{orth}} \left(\mathbf{B}_{i}, \mathbf{B}_{t}\right) = \left\| \mathbf{B}_{i}^\top\mathbf{B}_{t}\right\|_F^{2}.
\end{equation}
The LoRA parameters of previous tasks are frozen to prevent forgetting. However, since knowledge from earlier tasks is irregularly distributed across the column vectors formed by the $\mathbf{B}$ matrices, and the subspace occupied by old tasks grows linearly as more tasks are added, this strategy severely restricts the learning capacity of new tasks.

\paragraph{Input-Param Orthogonalization.}
Another line of work approximates the dominant gradient directions using input representations \cite{liang2024inflora}. After each task is learned, these methods sample input representations and perform SVD to extract their principal directions, then enforce the LoRA parameters $\mathbf{A}_t$ of the new task to be orthogonal to the leading singular vectors of $\mathbf{U}$ of previous tasks’ input representations, thereby effectively reducing interference with earlier tasks.
However, in practice, due to the rich representational capacity of pre-trained models, input representations are often highly diverse and energy-dispersed. As a result, constraining parameter updates based on input principal directions can overly restrict the effective update directions, leading to reduced plasticity when learning new tasks.
We provide an intuitive analysis in Appendix \ref{seq:energy_concentration}.


\subsection{Output Feature Drift–Induced Orthogonalization}
\label{sec:our_ortho}
In contrast to existing methods, we analyze parameter updates from the perspective of output feature drift. We find that, compared to parameter matrices or input representations, the output drift induced by task-specific LoRA fine-tuning exhibits significantly more concentrated energy. After applying SVD, most of the energy is captured by only a few leading principal directions. By constraining old-task knowledge to these compact directions, this property both preserves existing knowledge and frees up ample, interference-free capacity for learning subsequent tasks.

\paragraph{Output Feature Drift.}
For task $t$, the parameter update $\Delta \mathbf{W}_t = \mathbf{B}_t \mathbf{A}_t$ induces a change in output features:
\begin{equation}
    \delta y_{t} \left( \boldsymbol{x} \right)= \Delta \mathbf{W}_{t}\boldsymbol{x}.
\end{equation}
Given a batch of input features $\mathbf{X}_{t}= \left[ \boldsymbol{x}_{1}, \cdots,\boldsymbol{x}_{n} \right]$, the corresponding output drift matrix is
\begin{equation}
\label{eq:output_drift}
\Delta\mathbf{Y}_{t}=\Delta \mathbf{W}_{t}\mathbf{X}_{t}\in\mathbb{R}^{d_{\text{out}}\times n}.
\end{equation}
Empirically and theoretically, we observe that although input representations $\mathbf{X}_t$ may be high-rank, the induced output drift $\Delta\mathbf{Y}_t$ often lies in a much lower-dimensional subspace, reflecting task-specific semantic changes.

\paragraph{PCA on Output Drift.}
We perform PCA (or equivalently SVD) on the output drift matrix:
\begin{equation}
\label{eq:svd_of_drift}
\Delta\mathbf{Y}_{t}=\mathbf{U}_{t}\mathbf{\Sigma}_{t}\mathbf{V}_{t}^{\top},
\end{equation}
where the columns of $\mathbf{U}_t$ form an orthonormal basis of output directions, ordered by descending singular values (energy).
Crucially, we compute this decomposition using a proxy set sampled from the task feature distribution, and apply the resulting singular vectors to perform the energy-based transformation of the parameters $\mathbf{B}_t \mathbf{A}_t$ as described in the following paragraph. This process is conducted after completing training on the current task $t$ and does not require storing any data.

\paragraph{Energy-Structured Transformation.}
We then transform the LoRA parameters accordingly, such that $\mathbf{B}_t$ aligns with the PCA basis $\mathbf{U}_t$:
\begin{equation}
\label{eq:consolidation}
\mathbf{B}_t \leftarrow \mathbf{U}_t[:,:r_t], \quad
\mathbf{A}_t \leftarrow (\mathbf{U}_t[:,:r_t])^\top \mathbf{B}_t \mathbf{A}_t,
\end{equation}
where $r_t$ denotes the rank of the LoRA module $\mathbf{B}_t\mathbf{A}_t$. As a result, each rank component of LoRA becomes energy-ordered, and task-specific knowledge is concentrated in the leading ranks. Consequently, this construction enjoys a rank truncation optimality property, as formalized in Equation~\ref{eq:optimality_property}, and admits the expected output feature reconstruction error bound given in Equation~\ref{eq:error_bound}.


\begin{proposition}[Optimality Property]
\label{eq:optimality_property}
Among all parameter updates with rank at most $r$, $\mathbf{B}_{t}[:,:r]\mathbf{A}_{t}[:r,:]$ minimizes the expected output reconstruction error:
\begin{equation}
\mathbb{E}_{\boldsymbol{x}\sim\mathcal{D}_{t}}\big\|\mathbf{B}_{t}[:,:r]\mathbf{A}_{t}[:r,:]\boldsymbol{x}-\mathbf{B}_{t}\mathbf{A}_{t}\boldsymbol{x}\big\|^{2}.
\end{equation}
\end{proposition}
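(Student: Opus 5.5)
The plan is to reduce the statement to the Eckart--Young--Mirsky theorem applied to the output drift matrix. Write $\Delta\mathbf{W}_t=\mathbf{B}_t\mathbf{A}_t$ for the update before the energy-structured transformation. Let $\mathbf{\Phi}_t=\mathbb{E}_{\boldsymbol{x}\sim\mathcal{D}_t}[\boldsymbol{x}\boldsymbol{x}^\top]$ be the second-moment matrix of the inputs. In the statement this expectation is approximated by the proxy batch, so I will work with $\mathbf{\Phi}_t=\tfrac1n\mathbf{X}_t\mathbf{X}_t^\top$. With that choice the population claim and the empirical claim coincide, and I will state this identification explicitly as the standing assumption.

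For any candidate $\mathbf{M}$ of rank at most $r$,
\[
\mathbb{E}\|\mathbf{M}\boldsymbol{x}-\Delta\mathbf{W}_t\boldsymbol{x}\|^2=\operatorname{tr}\big((\mathbf{M}-\Delta\mathbf{W}_t)\mathbf{\Phi}_t(\mathbf{M}-\Delta\mathbf{W}_t)^\top\big)=\tfrac1n\|\mathbf{M}\mathbf{X}_t-\Delta\mathbf{Y}_t\|_F^2 .
\]
The problem therefore becomes minimizing $\|\mathbf{Z}-\Delta\mathbf{Y}_t\|_F^2$ over matrices $\mathbf{Z}=\mathbf{M}\mathbf{X}_t$ with $\operatorname{rank}\mathbf{M}\le r$. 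Since $\operatorname{rank}\mathbf{Z}\le r$, Eckart--Young--Mirsky gives the lower bound $\sum_{i>r}\sigma_i^2$, where the $\sigma_i$ are the singular values in \eqref{eq:svd_of_drift}. Any achiever must equal $\mathbf{U}_t[:,:r]\mathbf{U}_t[:,:r]^\top\Delta\mathbf{Y}_t$ (up to ties).

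Next I show that the truncated transformed LoRA attains this bound. By \eqref{eq:consolidation}, $\mathbf{B}_t[:,:r]\mathbf{A}_t[:r,:]=\mathbf{P}_r\Delta\mathbf{W}_t$ with $\mathbf{P}_r=\mathbf{U}_t[:,:r]\mathbf{U}_t[:,:r]^\top$. Here I use that rows $1,\dots,r$ of the new $\mathbf{A}_t$ are $\mathbf{U}_t[:,i]^\top\Delta\mathbf{W}_t$. Two small facts need checking:
\begin{itemize}
\item For $r\le r_t$ this is immediate, because $\operatorname{rank}\Delta\mathbf{Y}_t\le r_t$.
\item The full transformed product still equals $\Delta\mathbf{W}_t$ on the data, i.e.\ $\mathbf{P}_{r_t}\Delta\mathbf{W}_t\mathbf{X}_t=\Delta\mathbf{Y}_t$. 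This holds because the column space of $\Delta\mathbf{Y}_t$ is spanned by $\mathbf{U}_t[:,:r_t]$.
\end{itemize}
Consequently
\[
\mathbf{P}_r\Delta\mathbf{W}_t\mathbf{X}_t=\mathbf{P}_r\Delta\mathbf{Y}_t=\sum_{i\le r}\sigma_i\boldsymbol{u}_i\boldsymbol{v}_i^\top,
\]
which is exactly the Eckart--Young minimizer. Its error is $\sum_{i>r}\sigma_i^2$, matching the lower bound, which yields optimality.

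The main obstacle is the gap between the reference $\mathbf{B}_t\mathbf{A}_t$ in the statement and the original update. After the transformation, $\mathbf{B}_t\mathbf{A}_t=\mathbf{P}_{r_t}\Delta\mathbf{W}_t$. This equals $\Delta\mathbf{W}_t$ only on the data span, and exactly so if the column space of $\Delta\mathbf{W}_t$ is captured, e.g.\ when $\mathbf{X}_t$ has full row rank. I would handle this by showing the objective is the same for either reference under the empirical measure. A second obstacle is that the statement uses the true distribution while $\mathbf{U}_t$ comes from a proxy sample. I would therefore state the result for the distribution whose second moment is $\mathbf{\Phi}_t$: optimality holds exactly when the proxy second moment equals the population one, and otherwise only up to the sampling error.
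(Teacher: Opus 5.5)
Your proposal is correct and follows essentially the same route as the paper. Both arguments rewrite the expected error through the proxy second-moment matrix as $\frac{1}{n}\|\mathbf{M}\mathbf{X}_t-\Delta\mathbf{Y}_t\|_F^2$, apply Eckart--Young--Mirsky to $\Delta\mathbf{Y}_t$, and observe that $\mathbf{U}_t[:,:r]\mathbf{U}_t[:,:r]^\top\Delta\mathbf{W}_t$ induces exactly the truncated SVD. You are more explicit than the paper on three points it leaves implicit: the lower bound coming from $\operatorname{rank}(\mathbf{M}\mathbf{X}_t)\le r$, the proxy-versus-population caveat, and the pre- versus post-transformation choice of reference $\mathbf{B}_t\mathbf{A}_t$.
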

We provide a detailed proof in Appendix \ref{sec:optimality_property}.

\begin{proposition}[Truncation Error]
\label{eq:error_bound}
When truncating weight update matrix $\Delta \mathbf{W}_t = \mathbf{B}_t \mathbf{A}_t$ to retain only the leading $r$ ranks, i.e., $\Delta \mathbf{W}_t^{(r)}=\mathbf{B}_{t}[:,:r]\mathbf{A}_{t}[:r,:]$, the expected truncated output feature error is given by:
\begin{equation}
\mathbb{E}_{\boldsymbol{x} \sim \mathcal{D}_t} \left[ \| \Delta \mathbf{W}\boldsymbol{x} - \Delta \mathbf{W}_t^{(r)}\boldsymbol{x} \|_2^2 \right] = \sum_{i=r+1}^{d_{\text{out}}} \sigma_i^2.
\end{equation}
\end{proposition}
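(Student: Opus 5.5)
The plan is to reduce the expectation to a Frobenius norm of the drift matrix and then read the truncated update as an orthogonal projection of that matrix. I will interpret the expectation empirically over the proxy set $\mathbf{X}_t=[\boldsymbol{x}_1,\dots,\boldsymbol{x}_n]$ used in Equation~\eqref{eq:svd_of_drift}. Up to the normalization $1/n$, which I absorb into the singular values (equivalently, perform the SVD on $\Delta\mathbf{Y}_t/\sqrt{n}$), we have
\[
\mathbb{E}_{\boldsymbol{x}}\|\mathbf{M}\boldsymbol{x}\|_2^2=\tfrac1n\|\mathbf{M}\mathbf{X}_t\|_F^2
\]
for any matrix $\mathbf{M}$. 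With $\mathbf{M}=\Delta\mathbf{W}_t-\Delta\mathbf{W}_t^{(r)}$, the quantity to compute becomes $\|\Delta\mathbf{Y}_t-\Delta\mathbf{W}_t^{(r)}\mathbf{X}_t\|_F^2$, where $\Delta\mathbf{Y}_t=\Delta\mathbf{W}_t\mathbf{X}_t$ is the drift from Equation~\eqref{eq:output_drift}.

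Next I identify the truncated update. After the energy-structured transformation \eqref{eq:consolidation}, write $\mathbf{U}_r=\mathbf{U}_t[:,:r]$. Then $\mathbf{B}_t[:,:r]=\mathbf{U}_r$ and $\mathbf{A}_t[:r,:]=\mathbf{U}_r^\top\Delta\mathbf{W}_t$. This uses that the transformation preserves the product, i.e.\ $\mathbf{U}_t[:,:r_t]\mathbf{U}_t[:,:r_t]^\top\Delta\mathbf{W}_t=\Delta\mathbf{W}_t$. That identity holds because the column space of $\Delta\mathbf{Y}_t$ contains that of $\Delta\mathbf{W}_t$ whenever $\mathbf{X}_t$ spans the row space of $\Delta\mathbf{W}_t$, which I will state as the standing generic assumption. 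Hence $\Delta\mathbf{W}_t^{(r)}=\mathbf{U}_r\mathbf{U}_r^\top\Delta\mathbf{W}_t$, and therefore $\Delta\mathbf{W}_t^{(r)}\mathbf{X}_t=\mathbf{U}_r\mathbf{U}_r^\top\Delta\mathbf{Y}_t$.

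Now substitute the SVD $\Delta\mathbf{Y}_t=\mathbf{U}_t\mathbf{\Sigma}_t\mathbf{V}_t^\top$. This gives
\[
(\mathbf{I}-\mathbf{U}_r\mathbf{U}_r^\top)\Delta\mathbf{Y}_t=\sum_{i>r}\sigma_i\boldsymbol{u}_i\boldsymbol{v}_i^\top .
\]
By orthonormality of $\{\boldsymbol{u}_i\}$ and $\{\boldsymbol{v}_i\}$ its squared Frobenius norm is $\sum_{i=r+1}^{d_{\text{out}}}\sigma_i^2$, with $\sigma_i=0$ beyond the rank, which is the claimed identity. I would remark that this is exactly the Eckart--Young tail, consistent with Proposition~\ref{eq:optimality_property}.

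The main obstacle is interpretive rather than computational. It lies in making the expectation over $\mathcal{D}_t$ coincide with the finite proxy-sample SVD. I would handle it by stating explicitly that $\sigma_i$ denote the singular values of the normalized drift $\tfrac{1}{\sqrt n}\Delta\mathbf{Y}_t$ over the proxy set. Equivalently, the $\sigma_i^2$ are the eigenvalues of the second-moment matrix $\mathbb{E}[\Delta\mathbf{W}_t\boldsymbol{x}\boldsymbol{x}^\top\Delta\mathbf{W}_t^\top]$, and in the population version the identical projection argument goes through using traces, $\mathbb{E}\|\mathbf{P}^\perp\Delta\mathbf{W}_t\boldsymbol{x}\|^2=\mathrm{tr}(\mathbf{P}^\perp\mathbf{C})$ with $\mathbf{C}=\mathbb{E}[\Delta\mathbf{W}_t\boldsymbol{x}\boldsymbol{x}^\top\Delta\mathbf{W}_t^\top]$.
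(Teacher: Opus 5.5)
Your proposal is correct and follows the same route as the paper's proof. Both replace the expectation by the proxy-set average with the $1/n$ absorbed into the singular values, identify the truncated drift as the projection $\mathbf{U}_r\mathbf{U}_r^\top\Delta\mathbf{Y}_t$, and read off the SVD tail $\sum_{i>r}\sigma_i^2$. Your derivation of $\Delta\mathbf{W}_t^{(r)}=\mathbf{U}_r\mathbf{U}_r^\top\Delta\mathbf{W}_t$ from Eq.~\eqref{eq:consolidation} fills in a step the paper only asserts, but the spanning assumption you add is unnecessary on the proxy set: $\mathbf{U}_r^\top\mathbf{U}_t[:,:r_t]\mathbf{U}_t[:,:r_t]^\top=\mathbf{U}_r^\top$ makes $\mathbf{U}_r\mathbf{U}_r^\top\Delta\mathbf{W}_t$ the same whether $\Delta\mathbf{W}_t$ denotes the pre- or post-transformation product, and $\mathrm{rank}(\Delta\mathbf{Y}_t)\le r_t$ already gives $\mathbf{U}_t[:,:r_t]\mathbf{U}_t[:,:r_t]^\top\Delta\mathbf{Y}_t=\Delta\mathbf{Y}_t$.
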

We provide a detailed proof in Appendix \ref{sec:Truncation_Error}.

This property implies that our energy-structured transformation maximally preserves task-specific knowledge while reducing rank, making it particularly suitable for continual learning.

\paragraph{Output Feature Drift–Induced Orthogonalization.}
The above transformation ensures that the knowledge of previous tasks is energy-ordered and concentrated in the leading ranks, allowing the remaining directions to be discarded with negligible loss and reused for learning new tasks. Suppose that when task $t$ is about to be learned, the numbers of retained ranks for the previous tasks are 
$r_1^{(t)},\cdots,r_{t-1}^{(t)}$ (the rank allocation strategy is introduced in a later section). We first prune the LoRA parameters of previous tasks as:
\begin{equation}
\label{eq:rank_pruning}
\mathbf{B}_k\leftarrow\mathbf{B}_k[:,:r_{k}^{(t)}], \quad
\mathbf{A}_k\leftarrow\mathbf{A}_k[:r_{k}^{(t)}, :],
\end{equation}
We then initialize the LoRA module for the new task as:
\begin{equation}
\label{eq:lora_init}
\mathbf{B}_t\leftarrow\big[\mathbf{B}_1[:,r_{1}^{(t)}:],\cdots,\mathbf{B}_2[:,r_{2}^{(t)}:]\big], \quad
\mathbf{A}_t\leftarrow\mathbf{0}.
\end{equation}

During the learning of task $t$, $\mathbf{B}_t$ is kept frozen, which naturally enforces orthogonality between the output directions of the new task and those of previous tasks, thereby preventing interference. After task $t$ is learned, the same output-drift-based transformation described in the previous sections is applied to $\mathbf{B}_t\mathbf{A}_t$, ensuring that the newly acquired knowledge is again energy-ordered and energy-concentrated. This iterative process progressively frees low-energy directions, providing sustainable capacity for subsequent tasks.

\subsection{Dynamic Rank Allocation for Continual Learning}
\label{sec:rank_alloc}
Building upon the proposed output feature drift–induced orthogonalization, we design a dynamic rank allocation strategy to manage the stability-plasticity trade-off within a bounded parameter space. We view the output dimension $d_{\text{out}}$ as a finite capacity pool shared across all learned tasks.

When a new task $t$ arrives, we first release capacity from previously learned tasks by pruning their low-energy LoRA ranks. Specifically, for each previous task $k<t$, let $\{\sigma_{k,i}^2\}_{i=1}^{r_k}$ denote the squared singular values (energy) of its output drift, ordered in descending order. We retain only the smallest number of ranks $r_k^{(t)}$ such that the cumulative energy retention ratio satisfies:
\begin{equation}
\label{eq:energy_threshold}
\frac{\sum_{i=1}^{r_k^{(t)}} \sigma_{k,i}^2}{\sum_{i=1}^{r_k} \sigma_{k,i}^2} \ge \rho,
\end{equation}
where $\rho\in(0,1)$ is a predefined energy retention threshold. The remaining low-energy ranks are discarded and their corresponding orthogonal subspaces are released for reuse by the new task.
To ensure sufficient plasticity for learning task $t$, we enforce a minimum rank budget for the new task:
\begin{equation}
\label{eq:min_rank}
r_t^{\min} = \left\lceil \frac{d_{\text{out}}}{t} \right\rceil.
\end{equation}
If the capacity freed by energy-based pruning is insufficient to meet this minimum requirement, we further prune the least important ranks from all existing tasks in a uniform manner—always removing the lowest-energy directions—until the available capacity reaches $r_t^{\min}$. Since ranks are ordered by energy and contribute minimally to the task output, this additional pruning incurs negligible degradation of previously learned tasks.

Overall, this dynamic rank allocation strategy enables adaptive capacity management in continual learning: important knowledge from previous tasks is preserved optimally, while low-energy directions are progressively reclaimed to provide sufficient learning space for new tasks.

\subsection{Whole Process of E$^2$-LoRA}
The training pipeline is illustrated in Algorithm \ref{alg:e2lora}.
When the $t$-th task arrives, E$^2$-LoRA first allocates a task-specific learnable LoRA module following the dynamic rank allocation strategy described in Section \ref{sec:rank_alloc}. The backbone network and all LoRA components learned from previous tasks are kept frozen, while only the LoRA parameters associated with the current task are optimized.
To further mitigate catastrophic forgetting, we additionally incorporate self-distillation and classifier alignment, two techniques commonly adopted in continual learning \cite{li2017learning,zhang2023slca}.

\begin{algorithm}[t]
\caption{E$^2$-LoRA for Continual Learning}
\label{alg:e2lora}
\begin{algorithmic}[1]
\REQUIRE Pre-trained weights $\mathbf{W}_0$, task datasets $\{\mathcal{D}_t\}_{t=1}^T$.

\FOR{task $t = 1$ to $T$}
    \STATE \textbf{// Phase 1: Dynamic Rank Allocation}
    \IF{$t > 1$}
        \FOR{previous task $k = 1$ to $t-1$}
            \STATE Free-up orthogonal subspace (Eq. \ref{eq:rank_pruning}).
        \ENDFOR
        \STATE Initialize $\mathbf{B}_t$ using the freed-up basis (Eq. \ref{eq:lora_init}).
    \ELSE
        \STATE Randomly initialize $\mathbf{B}_t$ with full rank.
    \ENDIF
    \STATE Initialize $\mathbf{A}_t$ to zero with the same rank as $\mathbf{B}_t$.
    
    \STATE \textbf{// Phase 2: Task Training}
    \STATE Optimize $\mathbf{A}_t$ on $\mathcal{D}_t$ minimizing $\mathcal{L}$ (Eq. \ref{eq:total_loss}).
    
    \STATE \textbf{// Phase 3: Energy-Structured Transformation}
    \STATE Sample proxy input feature batch $\mathbf{X}_{t}$.
    \STATE Compute output drift: $\Delta \mathbf{Y}_t = \mathbf{B}_t \mathbf{A}_t \mathbf{X}_{t}$ (Eq. \ref{eq:output_drift}).
    \STATE Perform SVD on drift: $\Delta \mathbf{Y}_t = \mathbf{U}_t \mathbf{\Sigma}_t \mathbf{V}_t^\top$ (Eq. \ref{eq:svd_of_drift}).
    \STATE Transform $\mathbf{B}_t, \mathbf{A}_t$ using $\mathbf{U}_t$ (Eq. \ref{eq:consolidation})
    
    \STATE \textbf{// Phase 4: Classifier Alignment}
    \STATE Fine-tune classifier heads using synthetic features generated from class statistics.
\ENDFOR

\end{algorithmic}
\end{algorithm}

\paragraph{Self-Distillation.}
For tasks $t>1$, we employ a self-distillation strategy to preserve knowledge of previously learned classes. Specifically, before updating the model on the current task, we treat the model with frozen backbone and historical LoRA parameters as a teacher, and obtain its predictions without gradient propagation.
Let $\boldsymbol{z}_{\text{tea}}$ denote the logits produced by the teacher model, and $\boldsymbol{z}_{\text{stu}}$ denote the logits produced by the student model (i.e., the same network with the current task's LoRA activated). For the old classes, indexed by $\boldsymbol{\mathcal{C}}_{\text{old}}$, we apply temperature-scaled distillation. The distillation loss is defined as:
\begin{equation}
\mathcal{L}_{\text{distill}}=T^{2}\cdot\text{KL}\bigg(\text{Softmax}\bigg( \frac{\boldsymbol{z}_{\boldsymbol{\mathcal{C}}_{\text{old}}}^{\text{tea}}}{T}\bigg) \Bigm\|\text{Softmax}\bigg(\frac{\boldsymbol{z}_{\boldsymbol{\mathcal{C}}_{\text{old}}}^{\text{stu}}}{T}\bigg)\bigg) ,
\end{equation}
where $T$ is the temperature parameter (set to $T=2$ in our experiments), and $\text{KL}(\cdot\|\cdot)$ denotes the Kullback–Leibler divergence.
For the new classes $\boldsymbol{\mathcal{C}}_{\text{new}}$, the standard cross-entropy loss is applied:
\begin{equation}
\mathcal{L}_{\text{ce}}=\text{CE}(\boldsymbol{z}_{\boldsymbol{\mathcal{C}}_{\text{new}}}^{\text{stu}},\boldsymbol{y}_{\boldsymbol{\mathcal{C}}_{\text{new}}}),
\end{equation}
where $\boldsymbol{y}_{\boldsymbol{\mathcal{C}}_{\text{new}}}$ denotes the ground-truth labels of the current task.
The overall training objective is:
\begin{equation}
\label{eq:total_loss}
\mathcal{L}={\mathcal{L}}_{{\mathrm{ce}}}+{\lambda}{ \mathcal{L}}_{\mathrm{distill}},
\end{equation}
where $\lambda$ controls the strength of distillation.
This self-distillation mechanism encourages the student model to maintain consistent predictions on previously learned classes, thereby alleviating forgetting.

\begin{table*}[t]
\caption{Average and last performance on class-incremental learning benchmarks with ViT-B/16-IN21K as the pre-trained backbone.}
\label{tab:cil_main_results}
\begin{adjustbox}{width=\linewidth}  
\setlength{\tabcolsep}{2.5pt}
\begin{tabular}{lcccccccc}
\toprule[1.5pt]
\multirow{2}{*}{Method} & \multicolumn{2}{c}{ImageNet-R} & \multicolumn{2}{c}{CIFAR-100} & \multicolumn{2}{c}{CUB-200} & \multicolumn{2}{c}{Cars-196}
\\
\cmidrule[0.5pt](lr){2-3}\cmidrule[0.5pt](lr){4-5}\cmidrule[0.5pt](lr){6-7}\cmidrule[0.5pt](lr){8-9} & Last-Acc & Inc-Acc & Last-Acc & Inc-Acc & Last-Acc & Inc-Acc & Last-Acc & Inc-Acc
\\
\midrule[0.5pt]

\rowcolor[HTML]{ECECEC}\textit{Joint Training} & 82.76$_{\pm0.54}$ & - & 93.13$_{\pm0.21}$ & - & 88.26$_{\pm0.73}$ & - & 80.31$_{\pm0.13}$ & - \\

\midrule[0.5pt]

L2P \cite{wang2022learning} & 66.49$_{\pm0.40}$ & 72.83$_{\pm0.56}$ & 82.76$_{\pm1.17}$ & 88.48$_{\pm0.83}$ & 62.21$_{\pm1.92}$ & 73.83$_{\pm1.67}$ & 38.18$_{\pm2.33}$ & 51.79$_{\pm4.19}$ \\
DualPrompt \cite{wang2022dualprompt} & 68.50$_{\pm0.52}$ & 72.59$_{\pm0.24}$ & 85.56$_{\pm0.33}$ & 90.33$_{\pm0.33}$ & 66.00$_{\pm0.57}$ & 77.92$_{\pm0.50}$ & 40.14$_{\pm2.36}$ & 56.74$_{\pm1.78}$ \\
SLCA \cite{zhang2023slca} & 77.00$_{\pm0.33}$ & 81.17$_{\pm0.64}$ & 91.53$_{\pm0.28}$ & 94.09$_{\pm0.87}$ & 84.71$_{\pm0.40}$ & 90.94$_{\pm0.68}$ & 67.73$_{\pm0.85}$ & 76.93$_{\pm1.21}$ \\
RanPAC \cite{mcdonnell2023ranpac} & 75.28$_{\pm0.14}$ & 80.66$_{\pm0.58}$ & 91.09$_{\pm0.25}$ & 94.03$_{\pm0.58}$ & 87.88$_{\pm0.53}$ & 92.57$_{\pm0.55}$ & 61.43$_{\pm0.84}$ & 72.36$_{\pm1.18}$ \\
SSIAT \cite{tan2024semantically} & 79.38$_{\pm0.59}$ & 83.63$_{\pm0.43}$ & 91.35$_{\pm0.26}$ & 94.35$_{\pm0.60}$ & 88.75$_{\pm0.38}$ & \textbf{93.00}$_{\pm0.90}$ & \underline{71.02}$_{\pm0.69}$ & 76.93$_{\pm0.62}$ \\
EASE \cite{zhou2024expandable} & 77.75$_{\pm0.18}$ & 83.87$_{\pm0.78}$ & 86.54$_{\pm0.29}$ & 91.68$_{\pm0.18}$ & 79.90$_{\pm0.29}$ & 86.65$_{\pm0.63}$ & 35.46$_{\pm1.50}$ & 48.96$_{\pm0.96}$ \\
BiLoRA \cite{zhu2025bilora} & 77.95$_{\pm0.14}$ & 81.52$_{\pm0.26}$ & 87.46$_{\pm0.76}$ & 92.50$_{\pm0.62}$ & - & - & - & - \\
MOS \cite{sun2025mos} & 78.10$_{\pm0.12}$ & 83.61$_{\pm0.40}$ & 90.53$_{\pm0.35}$ & 94.09$_{\pm0.91}$ & \textbf{89.91}$_{\pm0.25}$ & \underline{92.87}$_{\pm0.56}$ & 67.76$_{\pm1.28}$ & 74.38$_{\pm1.13}$ \\
TUNA \cite{wang2025integrating} & \underline{79.44}$_{\pm0.38}$ & \underline{84.80}$_{\pm0.37}$ & \underline{91.79}$_{\pm0.15}$ & \underline{94.88}$_{\pm0.06}$ & 88.40$_{\pm0.42}$ & 92.01$_{\pm0.68}$ & 69.46$_{\pm0.35}$ & \underline{76.95}$_{\pm0.39}$ \\

\rowcolor[HTML]{C6E2FF}E$^2$-LoRA (Ours) & \textbf{82.77}$_{\pm0.10}$ & \textbf{87.18}$_{\pm0.12}$ & \textbf{92.13}$_{\pm0.19}$ & \textbf{95.01}$_{\pm0.02}$ & \underline{89.77}$_{\pm0.16}$ & 92.68$_{\pm0.48}$ & \textbf{75.82}$_{\pm0.28}$ & \textbf{80.90}$_{\pm0.73}$ \\

\bottomrule[1.5pt]
\end{tabular}
\end{adjustbox}
\end{table*}


\begin{figure*}[t]
    \centering
    \includegraphics[width=\linewidth]{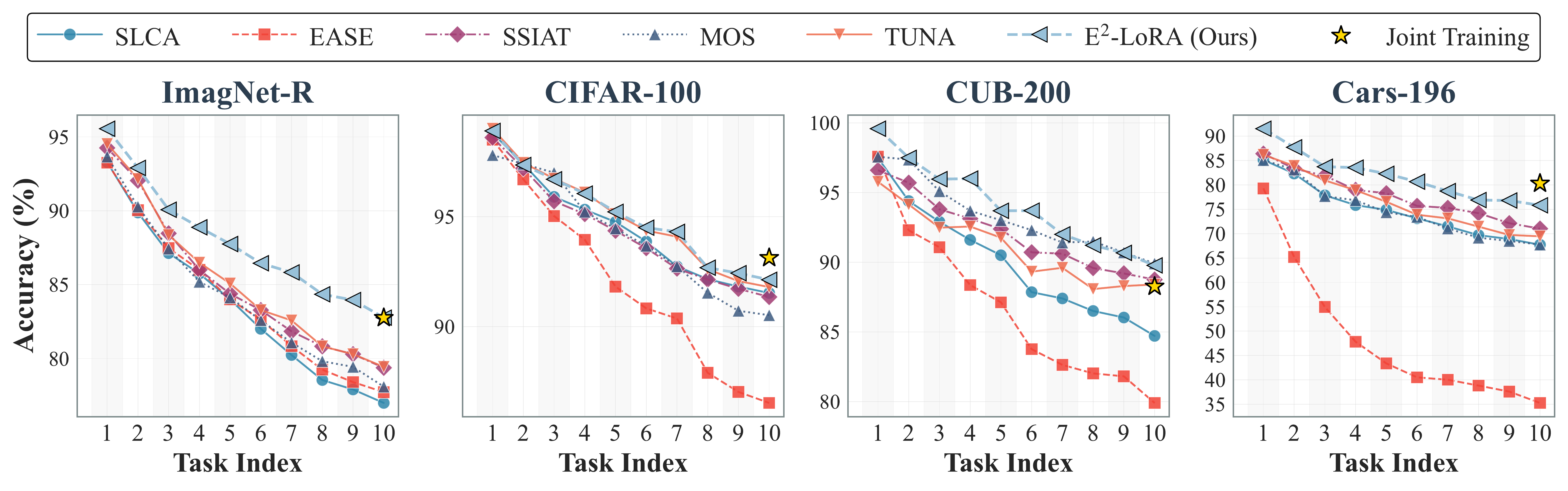}
    \caption{Evolution of accuracy across sequential tasks with ViT-B/16-IN21K as the pre-trained backbone.}
    \label{fig:top1_curve}
\end{figure*}

\paragraph{Classifier Alignment.}
After completing training on the $t$-th task, we further perform classifier alignment to reduce bias toward recently learned classes. Let $\boldsymbol{\mathcal{F}}_{t}=\{\boldsymbol{r}_{i,t}\}_{i=1}^{n_{t}}$ denote the set of features extracted from the training samples of task $t$, where $\boldsymbol{r}_{i,t}=f_{ \Theta} \left(\boldsymbol{x}_{i,t} \right)$ and $f_{ \Theta} \left( \cdot \right)$ represents the backbone network.
For each class $c$ encountered during continual learning, we estimate the class-wise feature distribution by computing the empirical mean $\boldsymbol{\mu}_c$ and covariance $\mathbf{\Sigma}_c$. We then sample $S$ synthetic features from the Gaussian distribution $\boldsymbol{r}\sim\mathcal{N}(\boldsymbol{\mu}_c,\mathbf{\Sigma}_c).$
Finally, we fine-tune the classifier using these generated samples and the standard cross-entropy loss. This alignment step rebalances the classifier across old and new classes, effectively mitigating classifier bias induced by task-wise incremental training.

\section{Experiments}

\subsection{Experimental Setup}

\paragraph{Datasets.}
We conduct comprehensive experiments on four widely-used class-incremental learning benchmarks: ImageNet-R \cite{hendrycks2021many}, CIFAR-100 \cite{krizhevsky2009learning}, CUB-200 \cite{wah2011caltech}, and Cars-196 \cite{krause20133d}. 
Following \cite{zhang2023slca}, we split each dataset into 10 sequential tasks. We also conduct experiments on two domain-incremental learning benchmarks, Office-Home \cite{venkateswara2017deep} and DomainNet \cite{peng2019moment}.

\paragraph{Evaluation Metrics.}
We adopt two standard evaluation protocols to measure both stability and plasticity.
The Last-Accuracy (Last-Acc) metric represents the final average accuracy on all seen tasks after learning the last task, measuring overall performance.
The Incremental Accuracy (Inc-Acc) metric indicates the average accuracy across all incremental steps, calculated as $\frac{1}{T} \sum \nolimits_{t=1}^{T}A_{t}$ where $T$ is the total number of tasks and $A_t$ is the accuracy after learning task $t$.

\paragraph{Implementation Details.}
We consider two typical pre-trained weights, namely ViT-B/16-IN21K and ViT-B/16-IN1K. Both are initially pre-trained on ImageNet-21K \cite{ridnik2021imagenet}, with the latter further fine-tuned on ImageNet-1K.
The learning rate for the classification layer is set to 0.01, while that for LoRA is configured at 0.0005. We employ the SGD optimizer with a batch size of 64, and the weight for the distillation loss is set to $\lambda=0.2$.
The energy retention ratio threshold is set to $\rho=0.9999$.
We conduct three independent runs for each experiment and report the mean and standard deviation of the results.

\begin{table*}[t]
\caption{Average and last performance comparison on longer-term benchmarks with ViT-B/16-IN21K as the pre-trained backbone.}
\label{tab:long_term}
\begin{adjustbox}{width=\linewidth}  
\setlength{\tabcolsep}{2.5pt}
\begin{tabular}{lcccccccc}
\toprule[1.5pt]
\multirow{2}{*}{Method} & \multicolumn{2}{c}{ImageNet-R 20 tasks} & \multicolumn{2}{c}{ImageNet-R 50 tasks} & \multicolumn{2}{c}{CIFAR-100 20 tasks} & \multicolumn{2}{c}{CIFAR-100 50 tasks}
\\
\cmidrule[0.5pt](lr){2-3}\cmidrule[0.5pt](lr){4-5}\cmidrule[0.5pt](lr){6-7}\cmidrule[0.5pt](lr){8-9} & Last-Acc & Inc-Acc & Last-Acc & Inc-Acc & Last-Acc & Inc-Acc & Last-Acc & Inc-Acc
\\
\midrule[0.5pt]

\rowcolor[HTML]{ECECEC}\textit{Joint Training} & 82.76$_{\pm0.54}$ & - & 82.76$_{\pm0.54}$ & - & 93.13$_{\pm0.21}$ & - & 93.13$_{\pm0.21}$ & - \\

\midrule[0.5pt]

L2P \cite{wang2022learning} & 62.15$_{\pm1.17}$ & 68.35$_{\pm2.12}$ & 55.89$_{\pm1.59}$ & 62.98$_{\pm2.89}$ & 80.72$_{\pm1.12}$ & 87.18$_{\pm0.83}$ & 73.91$_{\pm1.67}$ & 81.90$_{\pm0.98}$ \\
DualPrompt \cite{wang2022dualprompt} & 66.89$_{\pm0.40}$ & 73.07$_{\pm1.21}$ & 61.50$_{\pm0.86}$ & 68.63$_{\pm1.31}$ & 83.82$_{\pm0.51}$ & 90.22$_{\pm0.68}$ & 76.66$_{\pm0.74}$ & 85.18$_{\pm0.92}$ \\
CODA-Prompt \cite{smith2023coda} & 67.53$_{\pm0.24}$ & 73.64$_{\pm0.95}$ & 48.89$_{\pm0.90}$ & 55.59$_{\pm2.67}$ & 81.19$_{\pm0.31}$ & 87.27$_{\pm0.35}$ & 55.45$_{\pm0.48}$ & 68.39$_{\pm0.53}$ \\
InfLoRA \cite{liang2024inflora} & 71.46$_{\pm0.95}$ & 78.32$_{\pm1.22}$ & 60.49$_{\pm1.43}$ & 69.95$_{\pm2.18}$ & 82.19$_{\pm1.33}$ & 88.05$_{\pm0.64}$ & 56.65$_{\pm5.55}$ & 70.29$_{\pm1.86}$ \\
EASE \cite{zhou2024expandable} & 73.78$_{\pm0.44}$ & 80.29$_{\pm0.72}$ & 68.54$_{\pm0.71}$ & 75.77$_{\pm0.58}$ & 82.21$_{\pm0.72}$ & 90.76$_{\pm0.54}$ & 82.10$_{\pm0.66}$ & 87.65$_{\pm0.46}$ \\
LORA$^{-}$DRS \cite{liu2025lora} & 74.80$_{\pm0.73}$ & 80.69$_{\pm0.75}$ & 72.12$_{\pm0.87}$ & 77.94$_{\pm0.74}$ & 88.69$_{\pm0.15}$ & 92.25$_{\pm0.24}$ & 87.29$_{\pm0.31}$ & 91.29$_{\pm0.29}$ \\
MOS \cite{sun2025mos} & 75.58$_{\pm0.47}$ & 81.76$_{\pm0.26}$ & \underline{77.29}$_{\pm0.77}$ & \underline{83.06}$_{\pm1.41}$ & 88.85$_{\pm0.43}$ & 92.86$_{\pm0.37}$ & 85.51$_{\pm0.04}$ & 91.00$_{\pm0.26}$ \\
TUNA \cite{wang2025integrating} & \underline{77.32}$_{\pm0.31}$ & \underline{83.05}$_{\pm0.64}$ & 75.35$_{\pm0.67}$ & 80.95$_{\pm1.29}$ & \underline{91.31}$_{\pm0.12}$ & \textbf{94.68}$_{\pm0.16}$ & \underline{89.41}$_{\pm0.23}$ & \underline{93.00}$_{\pm0.42}$ \\

\rowcolor[HTML]{C6E2FF}E$^2$-LoRA (Ours) & \textbf{80.77}$_{\pm0.13}$ & \textbf{86.37}$_{\pm0.39}$ & \textbf{78.58}$_{\pm0.22}$ & \textbf{83.96}$_{\pm0.76}$ & \textbf{91.42}$_{\pm0.10}$ & \underline{94.55}$_{\pm0.07}$ & \textbf{90.70}$_{\pm0.29}$ & \textbf{93.86}$_{\pm0.09}$ \\

\bottomrule[1.5pt]
\end{tabular}
\end{adjustbox}
\end{table*}

\begin{table}[t]
\caption{Average and last performance on domain-incremental learning benchmarks with ViT-B/16-IN1K as the backbone.}
\label{tab:dil_main_results}
\begin{adjustbox}{width=\linewidth}  
\setlength{\tabcolsep}{2pt}
\begin{tabular}{clcc}
\toprule[1.5pt]
DS & Method & Last-Acc & Inc-Acc
\\
\midrule[0.5pt]

\multirow{10}{*}{\rotatebox[origin=c]{90}{\makecell{Office-Home}}} & L2P \cite{wang2022learning} & 80.03$_{\pm1.29}$ & 79.72$_{\pm4.19}$ \\
& DualPrompt \cite{wang2022dualprompt} & 80.85$_{\pm0.14}$ & 80.20$_{\pm3.81}$ \\
& CODA-Prompt \cite{smith2023coda} & 85.07$_{\pm0.34}$ & \underline{84.70}$_{\pm2.94}$ \\
& MEMO \cite{zhoumodel} & 63.09$_{\pm1.80}$ & 71.18$_{\pm2.76}$ \\
& RanPAC \cite{mcdonnell2023ranpac} & 82.28$_{\pm0.07}$ & 82.30$_{\pm3.34}$ \\
& EASE \cite{zhou2024expandable} & 76.33$_{\pm2.16}$ & 81.16$_{\pm3.52}$ \\
& SimpleCIL \cite{zhou2025revisiting} & 75.72$_{\pm0.00}$ & 75.69$_{\pm5.03}$ \\
& DCE \cite{li2025addressing} & 84.40$_{\pm0.20}$ & 84.60$_{\pm3.00}$ \\
& DUCT \cite{zhou2025dual} & \underline{85.42}$_{\pm0.33}$ & 81.28$_{\pm0.31}$ \\
& \cellcolor[HTML]{C6E2FF}E$^2$-LoRA (Ours) & \cellcolor[HTML]{C6E2FF}\textbf{88.25}$_{\pm0.25}$ & \cellcolor[HTML]{C6E2FF}\textbf{84.94}$_{\pm0.21}$ \\

\midrule[0.5pt]

\multirow{10}{*}{\rotatebox[origin=c]{90}{\makecell{DomainNet}}} & L2P \cite{wang2022learning} & 48.72$_{\pm2.83}$ & 50.45$_{\pm4.10}$ \\
& DualPrompt \cite{wang2022dualprompt} & 50.46$_{\pm3.17}$ & 52.28$_{\pm3.35}$ \\
& CODA-Prompt \cite{smith2023coda} & 59.99$_{\pm0.88}$ & 59.85$_{\pm4.49}$ \\
& MEMO \cite{zhoumodel} & 58.41$_{\pm3.20}$ & 61.92$_{\pm5.39}$ \\
& RanPAC \cite{mcdonnell2023ranpac} & 54.80$_{\pm0.36}$ & 55.20$_{\pm3.93}$ \\
& EASE \cite{zhou2024expandable} & 43.72$_{\pm1.70}$ & 50.50$_{\pm2.27}$ \\
& SimpleCIL \cite{zhou2025revisiting} & 44.08$_{\pm0.00}$ & 42.95$_{\pm4.84}$ \\
& DCE \cite{li2025addressing} & 63.50$_{\pm0.50}$ & 64.30$_{\pm6.00}$ \\
& DUCT \cite{zhou2025dual} & \underline{67.01}$_{\pm1.35}$ & \underline{67.16}$_{\pm3.75}$ \\
& \cellcolor[HTML]{C6E2FF}E$^2$-LoRA (Ours) & \cellcolor[HTML]{C6E2FF}\textbf{69.63}$_{\pm0.05}$ & \cellcolor[HTML]{C6E2FF}\textbf{68.69}$_{\pm0.04}$ \\

\bottomrule[1.5pt]
\end{tabular}
\end{adjustbox}
\end{table}

\subsection{Main Results}

\paragraph{Class-Incremental Learning Results.}
Table \ref{tab:cil_main_results} presents a comprehensive performance comparison on four widely-used benchmarks under a 10-task incremental setting.
Figure \ref{fig:top1_curve} illustrates the evolution of accuracy across sequential tasks.
Across all datasets, E$^2$-LoRA consistently outperforms existing state-of-the-art (SOTA) methods. Most notably, our method achieves results that are remarkably close to—and in some instances, even surpass—the Joint Training upper bound. For example, on ImageNet-R, E$^2$-LoRA attains a Last-Acc of 82.77\%, effectively matching the Joint Training performance of 82.76\%. On Cars-196, we achieve 75.82\% Last-Acc, drastically reducing the gap to the upper bound compared to prior arts like TUNA (69.46\%). This phenomenon suggests that our energy-concentrated subspace representation not only prevents catastrophic forgetting but also facilitates a "denoising" effect that focuses the model on the most transferable spectral features, occasionally yielding better generalization than the standard full-rank joint optimization.

\paragraph{Domain-Incremental Learning Results.}
Table \ref{tab:dil_main_results} reports the performance of domain-incremental learning on two widely used benchmarks, Office-Home and DomainNet. E$^2$-LoRA consistently outperforms existing state-of-the-art methods by a clear margin on both benchmarks. These results demonstrate that our approach effectively isolates domain-specific information in orthogonal, energy-ordered subspaces, thereby preventing interference across domains and enabling robust continual adaptation.

\paragraph{Robustness in Long-Term Continual Learning.}
To further evaluate the scalability of our approach, Table \ref{tab:long_term} summarizes performance under more demanding, longer-term scenarios involving 20 and 50 sequential tasks. As the number of tasks increases, most baseline methods suffer from a severe performance collapse due to the accumulation of task interference and capacity saturation. In contrast, E$^2$-LoRA maintains highly stable and superior accuracy even in the 50-task setting. Specifically, on ImageNet-R 50-tasks, E$^2$-LoRA preserves a Last-Acc of 78.58\%, outperforming the competitive TUNA baseline by over 3\%. Similarly, on CIFAR-100 50-tasks, we maintain a robust 90.70\% Last-Acc. These results demonstrate that our dynamic, energy-ordered rank allocation effectively mitigates the stability-plasticity dilemma; by prioritizing high-energy components and gracefully pruning less significant ones, E$^2$-LoRA sustains a near-constant learning capacity over extended horizons.

\begin{table*}[t]
\caption{Ablation study on the contribution of individual components in the proposed method.}
\label{tab:ablation_study}
\begin{adjustbox}{width=\linewidth}  
\setlength{\tabcolsep}{3.6pt}
\begin{tabular}{lcccccccccc}
\toprule[1.5pt]
\multirow{2}{*}{Method} & \multirow{2}{*}{KD} & \multirow{2}{*}{CA} & \multicolumn{2}{c}{ImageNet-R} & \multicolumn{2}{c}{CIFAR-100} & \multicolumn{2}{c}{CUB-200} & \multicolumn{2}{c}{Cars-196}
\\
\cmidrule[0.5pt](lr){4-5}\cmidrule[0.5pt](lr){6-7}\cmidrule[0.5pt](lr){8-9}\cmidrule[0.5pt](lr){10-11} &&& Last-Acc & Inc-Acc & Last-Acc & Inc-Acc & Last-Acc & Inc-Acc & Last-Acc & Inc-Acc
\\
\midrule[0.5pt]

\rowcolor[HTML]{ECECEC}\textit{Joint Training} & - & - & 82.76$_{\pm0.54}$ & - & 93.13$_{\pm0.21}$ & - & 88.26$_{\pm0.73}$ & - & 80.31$_{\pm0.13}$ & - \\



\midrule[0.5pt]

\multirow{4}{*}{\makecell{O-LoRA}} & \ding{55} & \ding{55} & 75.38$_{\pm0.39}$ & 80.53$_{\pm0.24}$ & 88.34$_{\pm0.16}$ & 92.45$_{\pm0.16}$ & 82.49$_{\pm0.58}$ & 89.31$_{\pm0.37}$ & 54.26$_{\pm0.88}$ & 62.94$_{\pm3.54}$ \\
 & \ding{51} & \ding{55} & 75.37$_{\pm0.51}$ & 80.42$_{\pm0.26}$ & 88.16$_{\pm0.50}$ & 92.21$_{\pm0.16}$ & 82.40$_{\pm0.26}$ & 89.27$_{\pm0.06}$ & 52.47$_{\pm1.00}$ & 61.62$_{\pm1.04}$ \\
 & \ding{55} & \ding{51} & 77.88$_{\pm0.41}$ & 82.39$_{\pm0.27}$ & 90.78$_{\pm0.28}$ & 93.62$_{\pm0.27}$ & 88.89$_{\pm0.75}$ & 92.52$_{\pm0.35}$ & 65.67$_{\pm0.42}$ & 72.73$_{\pm0.49}$ \\
 & \ding{51} & \ding{51} & 77.92$_{\pm0.61}$ & 82.25$_{\pm0.40}$ & 90.72$_{\pm0.12}$ & 93.61$_{\pm0.20}$ & 89.18$_{\pm0.38}$ & 92.91$_{\pm0.68}$ & 67.61$_{\pm0.85}$ & 73.51$_{\pm0.40}$ \\

\midrule[0.5pt]

\multirow{4}{*}{\makecell{InfLoRA}} & \ding{55} & \ding{55} & 77.59$_{\pm0.52}$ & 83.45$_{\pm0.31}$ & 85.20$_{\pm0.31}$ & 90.57$_{\pm0.25}$ & 79.89$_{\pm0.45}$ & 88.26$_{\pm0.12}$ & 57.66$_{\pm0.65}$ & 70.03$_{\pm0.45}$ \\
 & \ding{51} & \ding{55} & 77.44$_{\pm0.37}$ & 84.10$_{\pm0.22}$ & 85.57$_{\pm0.09}$ & 91.15$_{\pm0.18}$ & 80.99$_{\pm0.60}$ & 88.90$_{\pm0.41}$ & 63.32$_{\pm0.92}$ & 72.63$_{\pm0.80}$ \\
 & \ding{55} & \ding{51} & 78.00$_{\pm0.49}$ & 84.43$_{\pm0.28}$ & 86.96$_{\pm0.22}$ & 92.01$_{\pm0.29}$ & 81.35$_{\pm0.33}$ & 88.93$_{\pm0.35}$ & 58.95$_{\pm0.54}$ & 72.08$_{\pm0.20}$ \\
 & \ding{51} & \ding{51} & 77.87$_{\pm0.61}$ & 85.09$_{\pm0.35}$ & 87.33$_{\pm0.45}$ & 92.60$_{\pm0.21}$ & 82.18$_{\pm0.71}$ & 89.91$_{\pm0.58}$ & 69.73$_{\pm0.45}$ & 78.60$_{\pm0.17}$ \\

\midrule[0.5pt]

\rowcolor[HTML]{C6E2FF} & \ding{55} & \ding{55} & 80.28$_{\pm0.15}$ & 85.65$_{\pm0.39}$ & 89.42$_{\pm0.31}$ & 93.47$_{\pm0.21}$ & 81.80$_{\pm0.33}$ & 89.14$_{\pm0.49}$ & 66.04$_{\pm0.26}$ & 75.18$_{\pm0.27}$ \\
\rowcolor[HTML]{C6E2FF} & \ding{51} & \ding{55} & 80.28$_{\pm0.24}$ & 85.47$_{\pm0.29}$ & 89.03$_{\pm0.46}$ & 93.03$_{\pm0.36}$ & 83.04$_{\pm0.19}$ & 89.67$_{\pm0.26}$ & 63.34$_{\pm0.44}$ & 72.64$_{\pm0.16}$ \\
\rowcolor[HTML]{C6E2FF} & \ding{55} & \ding{51} & 82.52$_{\pm0.31}$ & 87.11$_{\pm0.45}$ & 92.02$_{\pm0.08}$ & 94.88$_{\pm0.08}$ & 89.29$_{\pm0.25}$ & 92.33$_{\pm0.41}$ & 72.15$_{\pm0.20}$ & 78.49$_{\pm0.49}$ \\
\rowcolor[HTML]{C6E2FF}\multirow{-4}{*}{\makecell{E$^2$-LoRA\\(Ours)}} & \ding{51} & \ding{51} & \textbf{82.77}$_{\pm0.10}$ & \textbf{87.18}$_{\pm0.12}$ & \textbf{92.13}$_{\pm0.19}$ & \textbf{95.01}$_{\pm0.02}$ & \textbf{89.77}$_{\pm0.16}$ & \textbf{92.68}$_{\pm0.48}$ & \textbf{75.82}$_{\pm0.28}$ & \textbf{80.90}$_{\pm0.73}$ \\

\bottomrule[1.5pt]
\end{tabular}
\end{adjustbox}
\end{table*}

\begin{table}[t]
\caption{Effect analysis of the proposed rank allocation strategy.}
\label{tab:rank_allo}
\begin{adjustbox}{width=\linewidth}  
\setlength{\tabcolsep}{2.5pt}
\begin{tabular}{lcccccc}
\toprule[1.5pt]
\multirow{2}{*}{Method} & \multicolumn{2}{c}{ImageNet-R 10 tasks} & \multicolumn{2}{c}{ImageNet-R 50 tasks}
\\
\cmidrule[0.5pt](lr){2-3}\cmidrule[0.5pt](lr){4-5} & Last-Acc & Inc-Acc & Last-Acc & Inc-Acc
\\
\midrule[0.5pt]

\rowcolor[HTML]{ECECEC}\textit{Full Rank} & 80.98$_{\pm0.34}$ & 86.48$_{\pm0.51}$ & 68.52$_{\pm0.73}$ & 80.25$_{\pm1.13}$ \\

\midrule[0.5pt]

$d_{\text{out}}/5$ & 82.01$_{\pm0.32}$ & 86.08$_{\pm0.72}$ & 77.90$_{\pm0.14}$ & 82.82$_{\pm0.96}$ \\
$d_{\text{out}}/10$ & 81.20$_{\pm0.09}$ & 84.84$_{\pm0.74}$ & 76.69$_{\pm0.83}$ & 81.57$_{\pm1.13}$ \\
$d_{\text{out}}/50$ & 77.71$_{\pm0.10}$ & 81.19$_{\pm0.93}$ & 73.65$_{\pm0.10}$ & 78.52$_{\pm1.14}$ \\
\rowcolor[HTML]{C6E2FF}Ours & \textbf{82.77}$_{\pm0.10}$ & \textbf{87.18}$_{\pm0.12}$ & \textbf{78.58}$_{\pm0.22}$ & \textbf{83.96}$_{\pm0.76}$ \\

\bottomrule[1.5pt]
\end{tabular}
\end{adjustbox}
\end{table}

\subsection{Ablation and Analysis}

\paragraph{Ablation Study.}
We conduct comprehensive ablation and replacement experiments on each component of our method, as summarized in Table \ref{tab:ablation_study}. Specifically, we replace our proposed E$^2$-LoRA with alternative backbone adaptation methods, including O-LoRA and InfLoRA, and evaluate their performance under the same continual learning setting. The results show that E$^2$-LoRA achieves significantly superior performance, demonstrating that energy ordering and concentration of LoRA ranks effectively balance the stability of previously learned knowledge and the plasticity required for learning new tasks.
In addition, incorporating simple yet effective knowledge distillation (KD) and classifier alignment (CA) further improves performance, enabling our method to approach the upper bound achieved by joint training.






\paragraph{Effect of the Rank Allocation Strategy.}
Table \ref{tab:rank_allo} illustrates the impact of allocating different ranks to new tasks on continual learning performance. The results show that fixed rank allocation strategies struggle to achieve optimal performance. In contrast, our method dynamically allocates ranks by jointly considering the energy retention of previous tasks and the plasticity required for the new task, consistently yielding superior results across settings.

\paragraph{Effect of the Proxy Dataset Size.}
We analyze the number of samples required in the proxy dataset for Output Feature Drift–Induced Orthogonalization. Figure \ref{fig:ablation_proxy_size} shows the effect of different proxy dataset sizes on the final performance. The results indicate that as few as 16 samples are sufficient to obtain a stable, energy-ordered and energy-concentrated orthogonal basis. This entails a very lightweight computational overhead, which can be performed immediately after task learning and does not require storing the proxy samples.

\begin{figure}[t]
    \centering
    \begin{subfigure}{0.49\linewidth}
        \centering
        \includegraphics[width=\linewidth]{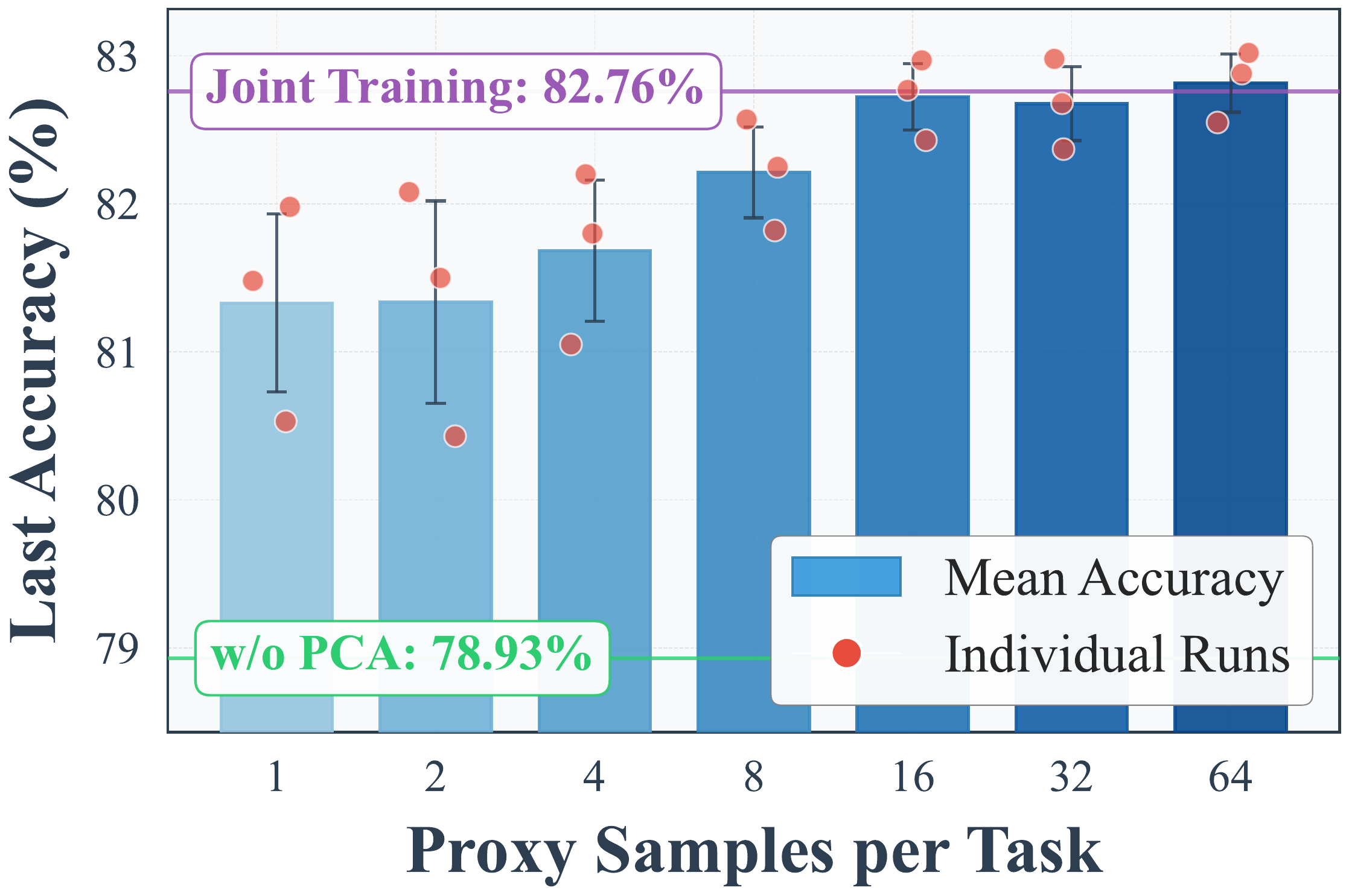}
        \caption{ImageNet-R}
    \end{subfigure}
    \hfill
    \begin{subfigure}{0.49\linewidth}
        \centering
        \includegraphics[width=\linewidth]{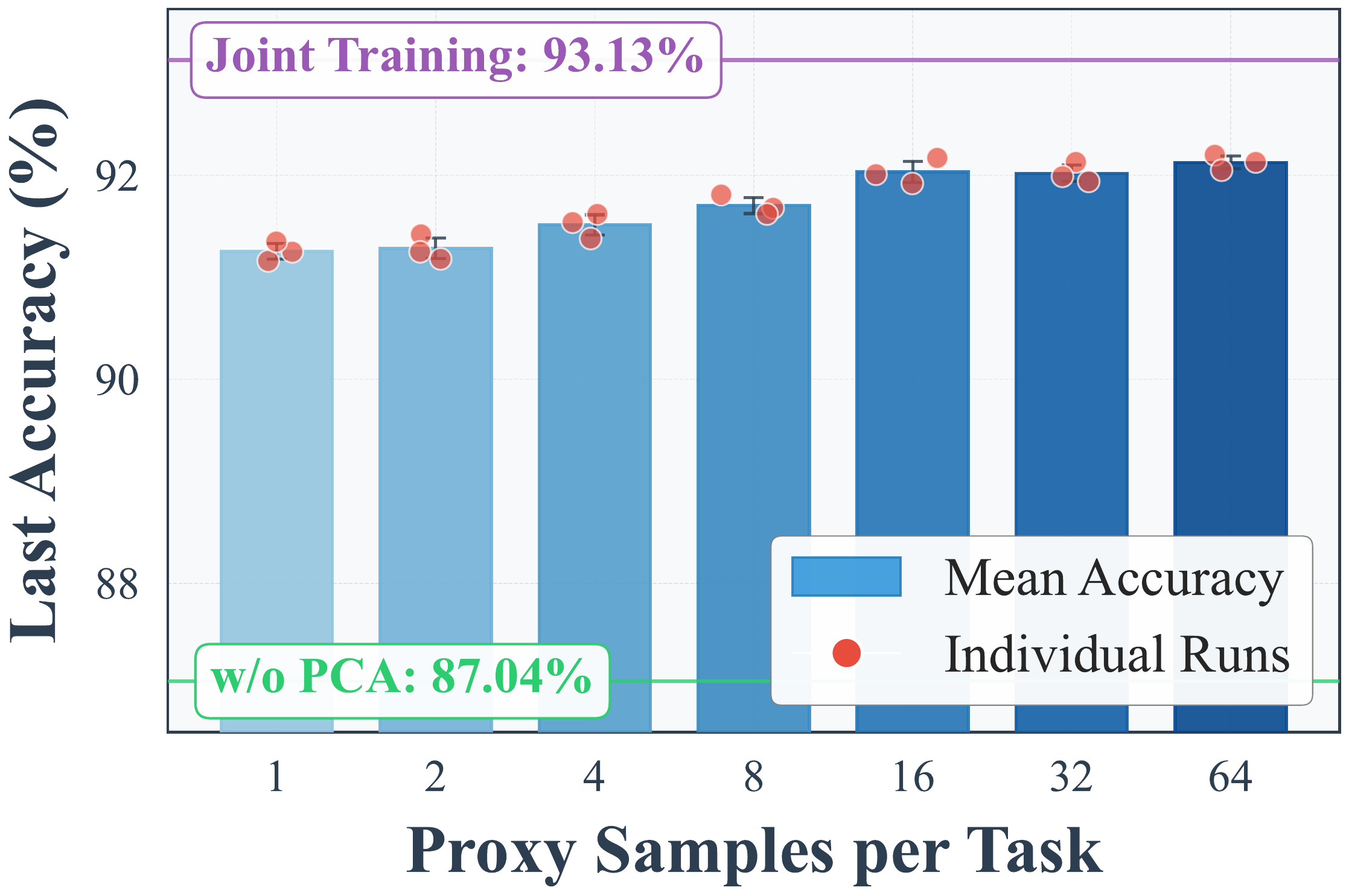}
        \caption{CIFAR-100}
    \end{subfigure}
    \caption{Impact of the number of proxy samples used in E$^2$-LoRA. Each configuration is evaluated over three random runs.}
    \label{fig:ablation_proxy_size}
\end{figure}

\section{Conclusion}
In this paper, we propose E$^2$-LoRA, a novel framework for continual learning that addresses the stability-plasticity dilemma by leveraging the spectral properties of task-specific output feature drift.
Unlike prior methods that orthogonalize in parameter or input spaces, we demonstrate that task-specific knowledge is highly concentrated within a low-dimensional subspace of the output drift. By applying an energy-structured transformation, E$^2$-LoRA explicitly orders knowledge by rank, enabling optimal preservation of past tasks while repurposing low-energy capacity for subsequent learning.
Coupled with a dynamic rank allocation strategy, E$^2$-LoRA enables adaptive capacity management within a bounded parameter budget. Extensive evaluations across multiple benchmarks demonstrate that E$^2$-LoRA consistently achieves state-of-the-art performance, significantly mitigating catastrophic forgetting while maintaining high plasticity for future task adaptation.

\section*{Impact Statement}
This paper presents work whose goal is to advance the field of Machine Learning. There are many potential societal consequences of our work, none which we feel must be specifically highlighted here.

\section*{Acknowledgement}
This research was supported by the Jiangsu Science Foundation (BG2024036, BK20243012), the National Science Foundation of China (62125602, U24A20324, 92464301), the New Cornerstone Science Foundation through the XPLORER PRIZE, the Fundamental Research Funds for the Central Universities (2242025K30024), and the Big Data Computing Center of Southeast University.

\nocite{liuinheriting}
\nocite{li2025one}

\bibliography{reference}
\bibliographystyle{icml2026}

\clearpage
\newpage
\appendix



\section{Proofs}
This section provides the proofs for the optimality property under rank truncation and the expected output error after truncation.

\subsection{Optimality Property under Rank Truncation}
\label{sec:optimality_property}

\begin{proposition}[Optimality Property]
Among all parameter updates with rank at most $r$, $\mathbf{B}_{t}[:,:r]\mathbf{A}_{t}[:r,:]$ minimizes the expected output reconstruction error:
\begin{equation}
\mathbb{E}_{\boldsymbol{x}\sim\mathcal{D}_{t}}\big\|\mathbf{B}_{t}[:,:r]\mathbf{A}_{t}[:r,:]\boldsymbol{x}-\mathbf{B}_{t}\mathbf{A}_{t}\boldsymbol{x}\big\|^{2}.
\end{equation}
\end{proposition}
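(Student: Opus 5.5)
The plan is to reduce the expected error to a Frobenius-norm low-rank approximation problem and then invoke the Eckart--Young--Mirsky theorem. Write $\Delta\mathbf{W}_t=\mathbf{B}_t\mathbf{A}_t$, and let $\mathbf{M}\in\mathbb{R}^{d_{\text{out}}\times d_{\text{in}}}$ be any candidate update with $\operatorname{rank}(\mathbf{M})\le r$. Let $\mathbf{C}_t=\mathbb{E}_{\boldsymbol{x}\sim\mathcal{D}_t}[\boldsymbol{x}\boldsymbol{x}^\top]$ denote the second-moment matrix of the inputs. In practice it is replaced by its empirical version $\frac1n\mathbf{X}_t\mathbf{X}_t^\top$ on the proxy batch, and I will state the result for the distribution that the proxy set represents. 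Then
\begin{equation*}
\mathbb{E}\|\mathbf{M}\boldsymbol{x}-\Delta\mathbf{W}_t\boldsymbol{x}\|^2=\operatorname{tr}\big((\mathbf{M}-\Delta\mathbf{W}_t)\mathbf{C}_t(\mathbf{M}-\Delta\mathbf{W}_t)^\top\big)=\|(\mathbf{M}-\Delta\mathbf{W}_t)\mathbf{C}_t^{1/2}\|_F^2 .
\end{equation*}
Equivalently, on the proxy batch this is $\frac1n\|\mathbf{M}\mathbf{X}_t-\Delta\mathbf{Y}_t\|_F^2$, with $\Delta\mathbf{Y}_t$ as in Eq.~\eqref{eq:output_drift}.

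For the lower bound, note that $\operatorname{rank}(\mathbf{M}\mathbf{C}_t^{1/2})\le r$. Applying Eckart--Young--Mirsky to the target $\Delta\mathbf{W}_t\mathbf{C}_t^{1/2}$ then gives error $\ge\sum_{i>r}\sigma_i^2$. Here the $\sigma_i$ are the singular values of $\Delta\mathbf{W}_t\mathbf{C}_t^{1/2}$, which coincide (up to the $1/n$ normalization) with those of the drift $\Delta\mathbf{Y}_t=\mathbf{U}_t\mathbf{\Sigma}_t\mathbf{V}_t^\top$ from Eq.~\eqref{eq:svd_of_drift}. The reason is that $\Delta\mathbf{Y}_t\Delta\mathbf{Y}_t^\top=n\,\Delta\mathbf{W}_t\mathbf{C}_t\Delta\mathbf{W}_t^\top$. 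For the same reason, the left singular vectors of both matrices are the columns of $\mathbf{U}_t$.

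Next I show the truncated update attains this bound. After the transformation in Eq.~\eqref{eq:consolidation}, we have $\mathbf{B}_t[:,:r]\mathbf{A}_t[:r,:]=\mathbf{U}_r\mathbf{U}_r^\top\Delta\mathbf{W}_t$, where $\mathbf{U}_r=\mathbf{U}_t[:,:r]$. This uses that $\mathbf{U}_t[:,:r_t]$ spans the column space of $\Delta\mathbf{W}_t$ (on the support of $\mathbf{C}_t$), so that the transformed product equals the original. Then
\begin{equation*}
(\mathbf{U}_r\mathbf{U}_r^\top\Delta\mathbf{W}_t-\Delta\mathbf{W}_t)\mathbf{C}_t^{1/2}=-(\mathbf{I}-\mathbf{U}_r\mathbf{U}_r^\top)\Delta\mathbf{W}_t\mathbf{C}_t^{1/2}.
\end{equation*}
Projecting the SVD of $\Delta\mathbf{W}_t\mathbf{C}_t^{1/2}$ onto the orthogonal complement of its top-$r$ left singular vectors leaves exactly the tail, whose squared Frobenius norm is $\sum_{i>r}\sigma_i^2$. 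So the truncation is the rank-$r$ SVD truncation of the weighted target, and it achieves the minimum. This computation is also Proposition~\ref{eq:error_bound}.

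The main obstacle is the identification step: I must justify that the basis $\mathbf{U}_t$ obtained from the proxy drift matrix is the left singular basis of $\Delta\mathbf{W}_t\mathbf{C}_t^{1/2}$ for the distribution in the expectation. I will handle this by defining $\mathcal{D}_t$ in the expectation as the distribution whose second moment $\mathbf{C}_t$ the proxy batch represents, and verifying the Gram identity above. A secondary subtlety is that $\mathbf{C}_t$ may be singular. In that case the minimizer is unique only on the range of $\mathbf{C}_t$, which is harmless because the statement only claims that the truncation is \emph{a} minimizer.
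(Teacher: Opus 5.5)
Your proposal is correct and takes essentially the same route as the paper's proof: write the expected error as the $\mathbf{\Sigma}_x^{1/2}$-weighted Frobenius norm, pass to the proxy drift $\Delta\mathbf{Y}_t=\Delta\mathbf{W}_t\mathbf{X}_t$, invoke Eckart--Young--Mirsky, and observe that $\mathbf{U}_t^{(r)}(\mathbf{U}_t^{(r)})^\top\Delta\mathbf{W}_t$ induces exactly the rank-$r$ truncation of the drift. The only difference is that you apply Eckart--Young--Mirsky to $\Delta\mathbf{W}_t\mathbf{C}_t^{1/2}$ and transfer the singular basis through the Gram identity, whereas the paper applies it directly to $\Delta\mathbf{Y}_t$ using $\operatorname{rank}(\widetilde{\Delta\mathbf{W}}\mathbf{X}_t)\le r$, which makes your identification step unnecessary; your explicit handling of the proxy-distribution assumption and of the transformed-versus-original product tightens points the paper leaves implicit.
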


\begin{proof}
For notational convenience, we denote $\Delta \mathbf{W}_t = \mathbf{B}_t \mathbf{A}_t$ and $\Delta \mathbf{W}_t^{(r)} = \mathbf{B}_t[:, :r] \mathbf{A}_t[:r, :]$.
For any rank-$r$ parameter update $\widetilde{\Delta \mathbf{W}}$,
the expected output reconstruction error can be written as:
\begin{equation}
\label{eq:exp_error}
\begin{aligned}
&\ \mathbb{E}_{\boldsymbol{x} \sim \mathcal{D}_t}
\left\|
\Delta \mathbf{W}_t \boldsymbol{x}
-
\widetilde{\Delta \mathbf{W}} \boldsymbol{x}
\right\|^2
\\=&\ \mathbb{E}_{\boldsymbol{x} \sim \mathcal{D}_t}
\left\|
(\Delta \mathbf{W}_t - \widetilde{\Delta \mathbf{W}})\boldsymbol{x}
\right\|^2 .
\end{aligned}
\end{equation}
Let $\mathbf{\Sigma}_x = \mathbb{E}[\boldsymbol{x}\boldsymbol{x}^\top]$ denote the input covariance matrix.
Then the expectation in~\eqref{eq:exp_error} can be expressed as:
\begin{equation}
\label{eq:frob_form}
\begin{aligned}
&\ \mathbb{E}_{\boldsymbol{x} \sim \mathcal{D}_t}
\left\|
(\Delta \mathbf{W}_t - \widetilde{\Delta \mathbf{W}})\boldsymbol{x}
\right\|^2
\\=&\ 
\left\|
(\Delta \mathbf{W}_t - \widetilde{\Delta \mathbf{W}})\mathbf{\Sigma}_x^{1/2}
\right\|_F^2.
\end{aligned}
\end{equation}
In practice, $\mathbf{\Sigma}_x$ is estimated using a proxy set $\mathbf{X}_t$, yielding:
\begin{equation}
\Delta \mathbf{Y}_t = \Delta \mathbf{W}_t \mathbf{X}_t.
\end{equation}
Thus, minimizing the expected reconstruction error over all rank-$r$ updates
$\widetilde{\Delta \mathbf{W}}$ is equivalent to solving:
\begin{equation}
\label{eq:matrix_approx}
\min_{\mathrm{rank}(\widetilde{\Delta \mathbf{W}}) \le r}
\left\|
\Delta \mathbf{Y}_t - \widetilde{\Delta \mathbf{W}} \mathbf{X}_t
\right\|_F^2 .
\end{equation}

Let the singular value decomposition of $\Delta \mathbf{Y}_t$ be:
\begin{equation}
\Delta \mathbf{Y}_t = \mathbf{U}_t \mathbf{\Sigma}_t \mathbf{V}_t^\top.
\end{equation}
By the Eckart--Young--Mirsky theorem, the optimal rank-$r$ approximation that minimizes
\eqref{eq:matrix_approx} is given by:
\begin{equation}
\Delta \mathbf{Y}_{t}^{(r)} = \mathbf{U}_t^{(r)} \mathbf{\Sigma}_t^{(r)} (\mathbf{V}_t^{(r)})^\top ,
\end{equation}
where $\mathbf{U}_t^{(r)}$ contains the top-$r$ left singular vectors.
Accordingly, the corresponding parameter update:
\begin{equation}
\Delta \mathbf{W}_t^{(r)} = \mathbf{U}_t^{(r)} (\mathbf{U}_t^{(r)})^\top \Delta \mathbf{W}_t,
\end{equation}
induces exactly $\Delta \mathbf{Y}_{t}^{(r)}$ and achieves the minimum output feature reconstruction error
among all rank-$r$ updates.
Therefore, $\Delta \mathbf{W}_t^{(r)}$ minimizes:
\begin{equation}
\mathbb{E}_{\boldsymbol{x} \sim \mathcal{D}_t}
\left\|
\Delta \mathbf{W}_t \boldsymbol{x}
-
\Delta \mathbf{W}_t^{(r)} \boldsymbol{x}
\right\|^2,
\end{equation}
over all rank-$r$ parameter updates, completing the proof.
\end{proof}


\subsection{Output Truncation Error}
\label{sec:Truncation_Error}

\begin{proposition}[Truncation Error]
When truncating weight update matrix $\Delta \mathbf{W}_t = \mathbf{B}_t \mathbf{A}_t$ to retain only the leading $r$ ranks, i.e., $\Delta \mathbf{W}_t^{(r)}=\mathbf{B}_{t}[:,:r]\mathbf{A}_{t}[:r,:]$, the expected truncated output feature error is given by:
\begin{equation}
\mathbb{E}_{\boldsymbol{x} \sim \mathcal{D}_t} \left[ \| \Delta \mathbf{W}\boldsymbol{x} - \Delta \mathbf{W}_t^{(r)}\boldsymbol{x} \|_2^2 \right] = \sum_{i=r+1}^{d_{\text{out}}} \sigma_i^2.
\end{equation}
\end{proposition}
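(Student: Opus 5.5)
Since the post-hoc transformation of Eq.~\eqref{eq:consolidation} sets $\mathbf{B}_t=\mathbf{U}_t[:,:r_t]$ and $\mathbf{A}_t=(\mathbf{U}_t[:,:r_t])^\top\Delta\mathbf{W}_t$, the truncated update is a projection of the full one:
\begin{equation*}
\Delta\mathbf{W}_t^{(r)}=\mathbf{U}_t^{(r)}(\mathbf{U}_t^{(r)})^\top\Delta\mathbf{W}_t ,
\end{equation*}
where $\mathbf{U}_t^{(r)}$ collects the top-$r$ left singular vectors of $\Delta\mathbf{Y}_t$.

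\textbf{Step 1: reduce to the orthogonal complement.} For every input,
\begin{equation*}
\Delta\mathbf{W}_t\boldsymbol{x}-\Delta\mathbf{W}_t^{(r)}\boldsymbol{x}=(\mathbf{I}-\mathbf{U}_t^{(r)}(\mathbf{U}_t^{(r)})^\top)\Delta\mathbf{W}_t\boldsymbol{x} .
\end{equation*}
So the error is the energy of the drift $\delta y_t(\boldsymbol{x})$ lying in the orthogonal complement of the leading $r$ output directions.

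\textbf{Step 2: write the expectation as a trace.} As in the proof of the Optimality Property, I will use $\mathbf{\Sigma}_x=\mathbb{E}[\boldsymbol{x}\boldsymbol{x}^\top]$. With $\mathbf{P}_\perp=\mathbf{I}-\mathbf{U}_t^{(r)}(\mathbf{U}_t^{(r)})^\top$, the expected error becomes
\begin{equation*}
\operatorname{tr}\big(\mathbf{P}_\perp\,\Delta\mathbf{W}_t\mathbf{\Sigma}_x\Delta\mathbf{W}_t^\top\,\mathbf{P}_\perp\big).
\end{equation*}
The matrix $\mathbf{C}=\Delta\mathbf{W}_t\mathbf{\Sigma}_x\Delta\mathbf{W}_t^\top$ is the second-moment matrix of the output drift. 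Replacing $\mathbb{E}$ by the empirical average over the proxy set $\mathbf{X}_t$ (which the paper does throughout) gives $\mathbf{C}=\Delta\mathbf{Y}_t\Delta\mathbf{Y}_t^\top$.

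\textbf{Step 3: diagonalize with the SVD.} From Eq.~\eqref{eq:svd_of_drift},
\begin{equation*}
\Delta\mathbf{Y}_t\Delta\mathbf{Y}_t^\top=\mathbf{U}_t\mathbf{\Sigma}_t^2\mathbf{U}_t^\top .
\end{equation*}
In the basis $\mathbf{U}_t$, the matrix $\mathbf{P}_\perp$ is the diagonal selector of indices $r+1,\dots,d_{\text{out}}$. The trace therefore collapses to $\sum_{i=r+1}^{d_{\text{out}}}\sigma_i^2$, with $\sigma_i=0$ for $i$ beyond the rank of $\Delta\mathbf{Y}_t$. Equivalently, $\|\mathbf{P}_\perp\Delta\mathbf{Y}_t\|_F^2=\|\Delta\mathbf{Y}_t-\Delta\mathbf{Y}_t^{(r)}\|_F^2$, which is the Eckart--Young residual already used for the previous proposition.

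\textbf{Main obstacle: normalization.} The identity is exact only when $\sigma_i^2$ is read as the eigenvalues of the drift second moment, i.e.\ the squared singular values of $\Delta\mathbf{Y}_t/\sqrt{n}$. It also requires the expectation to be the empirical one over the proxy set, or the population one if the SVD is of the population drift. I would state this convention explicitly, absorbing the $1/n$ into $\sigma_i$. Beyond that, the argument is a routine trace computation.
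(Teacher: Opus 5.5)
Your proposal is correct and follows essentially the same route as the paper: both replace the expectation by the empirical average over the proxy set and write the truncated drift as the projection $\mathbf{U}^{(r)}(\mathbf{U}^{(r)})^\top\Delta\mathbf{Y}$. Both then read off the residual energy $\sum_{i>r}\sigma_i^2$ from the SVD, absorbing the $1/N$ factor into the singular values exactly as you propose. Your trace computation with $\Delta\mathbf{Y}\Delta\mathbf{Y}^\top=\mathbf{U}\mathbf{\Sigma}^2\mathbf{U}^\top$ is equivalent to the paper's direct evaluation of $\|\mathbf{U}(\mathbf{\Sigma}-\mathbf{\Sigma}_r)\mathbf{V}^\top\|_F^2$, and you additionally derive the projection identity from Eq.~\eqref{eq:consolidation}, which the paper only asserts.
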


\begin{proof}
For simplicity of exposition, we omit the task index $t$ in the following proof.
The expected $L_2$ error over the input distribution $\mathcal{D}$ can be empirically estimated using the proxy set $\mathbf{X}$ (where $N$ is the number of samples):
\begin{equation}
\begin{aligned}
\mathcal{L} &= \mathbb{E}_{x \sim \mathcal{D}} [ \| (\Delta \mathbf{W} - \Delta \mathbf{W}^{(r)}) x \|_2^2 ] \\
&\approx \frac{1}{N} \sum_{n=1}^{N} \| (\Delta \mathbf{W} - \Delta \mathbf{W}^{(r)}) x_n \|_2^2 \\
&= \frac{1}{N} \| (\Delta \mathbf{W} - \Delta \mathbf{W}^{(r)}) \mathbf{X} \|_F^2.
\end{aligned}
\end{equation}
Recall that the actual output drift is $\Delta \mathbf{Y} = \Delta \mathbf{W} \mathbf{X}$. The output of the rank-$r$ approximated model is $\Delta \mathbf{Y}^{(r)} = \Delta \mathbf{W}^{(r)} \mathbf{X}$. Thus:
\begin{equation}
\mathcal{L} = \frac{1}{N} \| \Delta \mathbf{Y} - \Delta \mathbf{Y}^{(r)} \|_F^2.
\end{equation}
In E$^2$-LoRA, the approximation is constructed by projecting the original drift onto the subspace spanned by the first $r$ principal components of the output space:
\begin{equation}
\Delta \mathbf{Y}^{(r)} = \mathbf{U}^{(r)} (\mathbf{U}^{(r)})^\top \Delta \mathbf{Y}.
\end{equation}
Substituting the full SVD $\Delta \mathbf{Y} = \mathbf{U} \mathbf{\Sigma} \mathbf{V}^\top$ into this expression:
\begin{equation}
\Delta \mathbf{Y}^{(r)} = \mathbf{U}^{(r)} (\mathbf{U}^{(r)})^\top (\mathbf{U} \mathbf{\Sigma} \mathbf{V}^\top).
\end{equation}
Since $\mathbf{U}$ is orthonormal ($\mathbf{U}^\top \mathbf{U} = \mathbf{I}$), the term $(\mathbf{U}^{(r)})^\top \mathbf{U}$ yields a truncated identity matrix $[\mathbf{I}_{r \times r} \mid \mathbf{0}_{r \times (d_{\text{out}}-r)}]$. Therefore:
\begin{equation}
\Delta \mathbf{Y}^{(r)} = \mathbf{U} \mathbf{\Sigma}_r \mathbf{V}^\top,
\end{equation}
where $\mathbf{\Sigma}_r = \text{diag}(\sigma_1, \dots, \sigma_r, 0, \dots, 0)$.
The error matrix (residual) is:
\begin{equation}
\mathbf{E}_r = \Delta \mathbf{Y} - \Delta \mathbf{Y}^{(r)} = \mathbf{U} (\mathbf{\Sigma} - \mathbf{\Sigma}_r) \mathbf{V}^\top = \sum_{i=r+1}^{d_{\text{out}}} \sigma_i u_i v_i^\top.
\end{equation}
Using the fact that $\| \mathbf{U} \mathbf{A} \mathbf{V}^\top \|_F^2 = \| \mathbf{A} \|_F^2$ for orthonormal matrices $\mathbf{U}, \mathbf{V}$:
\begin{equation}
\| \mathbf{E}_r \|_F^2 = \| \mathbf{\Sigma} - \mathbf{\Sigma}_r \|_F^2 = \sum_{i=r+1}^{d_{\text{out}}} \sigma_i^2.
\end{equation}
Thus:
\begin{equation}
\mathcal{L} = \frac{1}{N} \sum_{i=r+1}^{d_{\text{out}}} \sigma_i^2.
\end{equation}
If we consider the singular values normalized by the sample size, the factor $1/N$ is absorbed, yielding:
\begin{equation}
\mathbb{E}_{x \sim \mathcal{D}} \left[ \| \Delta \mathbf{W} x - \Delta \mathbf{W}^{(r)} x \|_2^2 \right] = \sum_{i=r+1}^{d_{\text{out}}} \sigma_i^2.
\end{equation}
This completes the proof.

\end{proof}

\section{Experimental Details}
\subsection{Detailed Experimental Settings}
Table \ref{tab:setting_detail} summarizes the experimental settings across different continual learning benchmarks, including the pre-trained models used, learning rates for the classifier and LoRA modules, the number of training epochs per task, the number of epochs for classifier alignment, and the batch size.

\subsection{Placement of LoRA Modules}
In all experiments with E$^2$-LoRA, we integrate LoRA modules into the QKV projection layer of the Attention and the first layer of the MLP. This configuration is informed by our post-training energy analysis of model parameters, where we observed that the energy of parameter updates in the Attention O-projection and the second MLP layer is negligible. By omitting LoRA in these layers, we significantly reduce the computational overhead during training while maintaining competitive performance. During inference, the learned LoRA weights can be seamlessly merged into the original pre-trained parameters; thus, our method introduces zero additional inference latency or computational cost.

\subsection{Evaluation Strategy on Domain-Incremental Learning Benchmarks}
In Domain-Incremental Learning (DIL), each task consists of samples from different domains belonging to a shared set of classes. To evaluate performance, we aggregate the prediction scores for each class by summing the outputs from the corresponding classifiers across all domains. The resulting cumulative score is then used to determine the final classification result for each category.

\begin{table*}[t]
\caption{Continual learning settings across different benchmarks. CA denotes classifier alignment.}
\label{tab:setting_detail}
\begin{adjustbox}{width=\linewidth}  
\setlength{\tabcolsep}{5pt}
\begin{tabular}{lcccccccc}
\toprule[1.5pt]
Benchmarks & backbone & lr (classifier) & lr (LoRA) & epochs/task & CA epochs/task & batchsize
\\
\midrule[0.5pt]

ImageNet-R \cite{hendrycks2021many} & ViT-B/16-IN21K & 0.01 & 0.0005 & 5 & 3 & 64 \\
CIFAR-100 \cite{krizhevsky2009learning} & ViT-B/16-IN21K & 0.01 & 0.0005 & 5 & 3 & 64 \\
CUB-200 \cite{wah2011caltech} & ViT-B/16-IN21K & 0.01 & 0.0005 & 10 & 3 & 64 \\
Cars-196 \cite{krause20133d} & ViT-B/16-IN21K & 0.01 & 0.0005 & 50 & 3 & 64 \\
Office-Home \cite{venkateswara2017deep} & ViT-B/16-IN1K & 0.1 & 0.0005 & 15 & 5 & 64 \\
DomainNet \cite{peng2019moment} & ViT-B/16-IN1K & 0.1 & 0.0005 & 15 & 5 & 64 \\

\bottomrule[1.5pt]
\end{tabular}
\end{adjustbox}
\end{table*}

\section{Additional Experiments}

\subsection{Energy Evolution across Sequential Tasks.}
We analyze the energy distribution of sequential tasks in Figure \ref{fig:energy_curve_heatmap}. Specifically, we conduct experiments on the ImageNet-R benchmark with 10 sequential tasks and record the singular values associated with the retained ranks of the LoRA modules for each task. The energy of a task is computed as the sum of the singular values over its allocated ranks.
As the model learns classes in a sequential manner, the results reveal a clear trend: earlier tasks exhibit higher energy, while the energy gradually decreases as training progresses. This behavior indicates that the model progressively adapts to the shared data distribution across tasks, leading to more compact and efficient representations. Eventually, the energy converges to a low-loss region that reflects an optimized representation for the entire benchmark.

\begin{figure}[t]
    \centering
    \includegraphics[width=\linewidth]{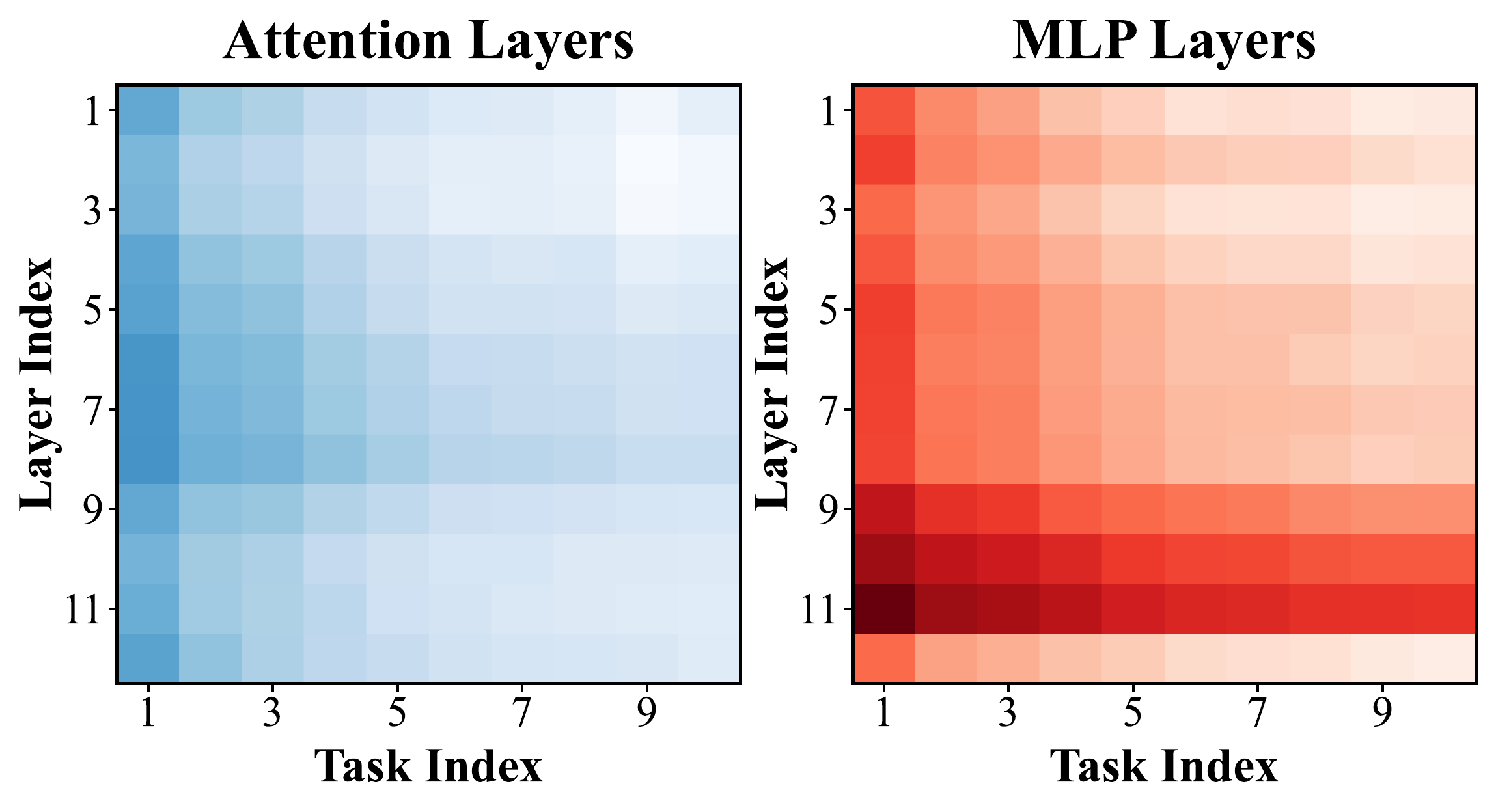}
    \caption{Changes in the energy of parameter updates as new tasks arrive. Experiments are conducted on the ImageNet-R dataset.}
    \label{fig:energy_curve_heatmap}
\end{figure}

\begin{figure}[t]
    \centering
    \includegraphics[width=\linewidth]{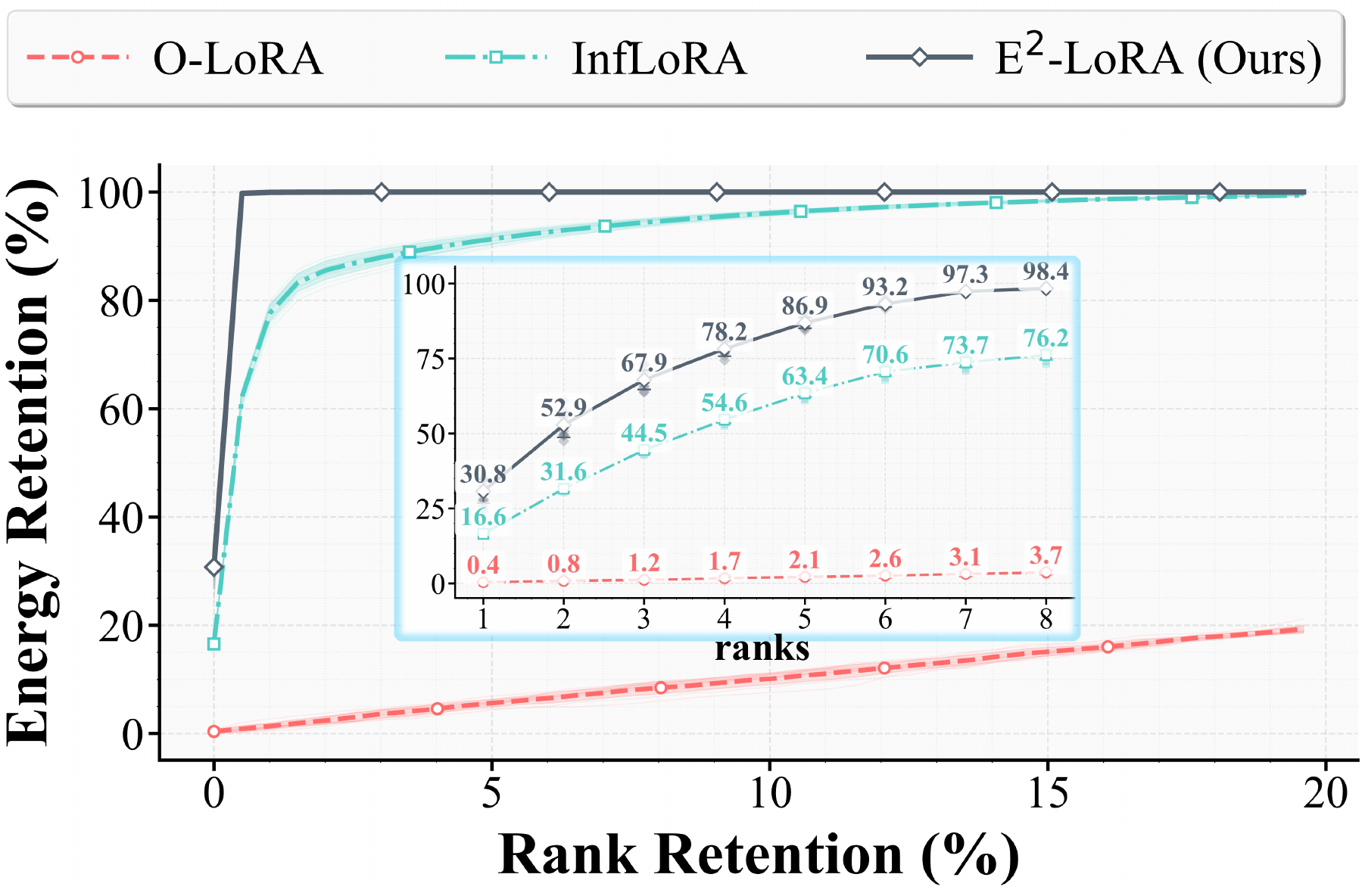}
    \caption{Energy Retention (\%) vs. Rank Retention (\%). Comparison of energy concentration across three representative methods: parameter bases (O-LoRA), principal bases of the input space (InfLoRA), and principal bases of the output drift space (E$^2$-LoRA).}
    \label{fig:energy_retention_inr}
\end{figure}
\subsection{Analysis of Energy Concentration}
\label{seq:energy_concentration}
We analyze the energy concentration of different orthogonalization targets: the columns of weight matrices in O-LoRA, the principal directions of input representations in InfLoRA, and the principal directions of output feature drift in our E$^2$-LoRA, as shown in Figure \ref{fig:energy_retention_inr}.
Experiments are conducted on ImageNet-R, where we capture the energy of the QKV linear projection layers in all 12 layers across all 10 tasks.

The results indicate that the energy in the parameter column vector space is uniformly distributed, which hinders the creation of free dimensions for subsequent tasks via space reduction. Applying SVD to the input representation space yields more concentrated energy than the parameter space; however, since the input features combine pre-trained and task-specific components, the space remains high-dimensional and energy is still relatively dispersed. In contrast, the task-specific output drift space identified by our method exhibits higher energy concentration, allowing parameter updates to be focused on a few dominant directions, thereby preserving core knowledge while providing interference-free, orthogonal capacity for subsequent tasks.

\subsection{Sensitivity Analysis of the Hyperparameter $\lambda$}
We conduct a sensitivity analysis on the distillation loss weight $\lambda$, as shown in Figure \ref{fig:lambda_ablation_study}. When $\lambda$ ranges from 0.1 to 0.5, the performance remains relatively stable. In the absence of the distillation loss ($\lambda = 0$), performance on the Cars-196 dataset degrades noticeably, while the other datasets exhibit only slight drops. In all experiments reported in the main text, we set $\lambda=0.2$.


\begin{figure}[t]
    \centering
    \includegraphics[width=\linewidth]{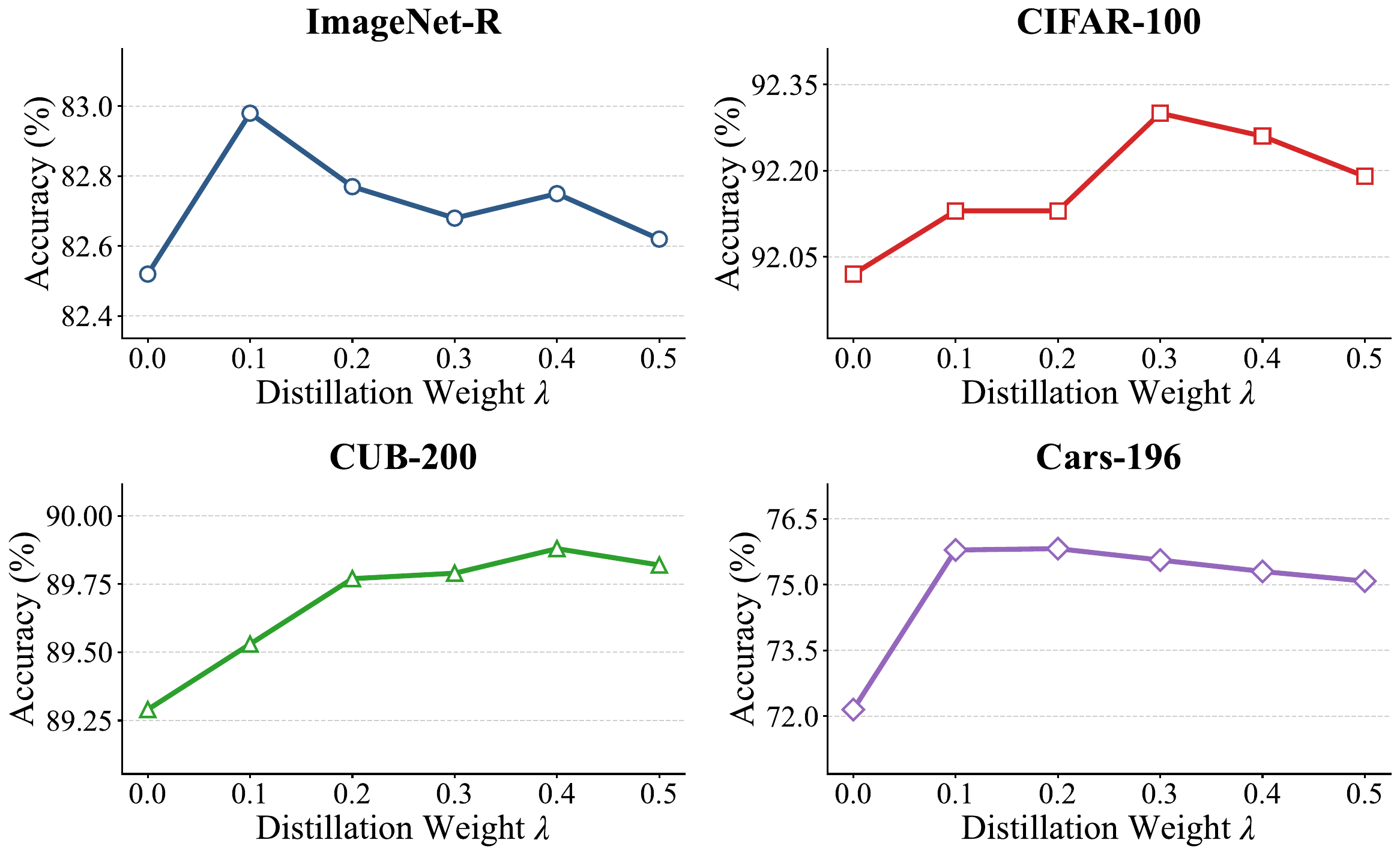}
    \caption{Impact of the distillation loss weight $\lambda$.}
    \label{fig:lambda_ablation_study}
\end{figure}

\begin{table}[t]
\caption{Last performance on 10-task benchmarks under standard random sampling and class-imbalance settings.}
\label{tab:sampling_strategy}
\begin{adjustbox}{width=\linewidth}  
\setlength{\tabcolsep}{2.9pt}
\begin{tabular}{lccc}
\toprule[1.5pt]
Sampling Strategy & ImageNet-R & CIFAR-100 & CUB-200
\\
\midrule[0.5pt]

Random & 82.77$_{\pm0.10}$ & 92.13$_{\pm0.19}$ & 89.77$_{\pm0.16}$ \\
Class-Imbalance & 82.62$_{\pm0.19}$ & 92.09$_{\pm0.19}$ & 89.64$_{\pm0.42}$ \\

\bottomrule[1.5pt]
\end{tabular}
\end{adjustbox}
\end{table}
\subsection{Analysis of the Sampling Strategy for the Proxy Set}
In our experiments, we employed a simple random sampling approach because we found the performance of E$^2$-LoRA to be remarkably stable. Even in an extreme class-imbalance scenario—where only a single category is sampled to compute the principal components—the performance remains consistent. Table \ref{tab:sampling_strategy} compares the Last-Acc under standard random sampling versus this extreme imbalance setting.

\end{document}